\newcommand{\etal}{\textit{et al}. }
\newcommand{\ie}{\textit{i}.\textit{e}., }
\newcommand{\eg}{\textit{e}.\textit{g}., }
\DeclareMathOperator{\diag}{\operatorname{diag}}
\DeclareMathOperator{\tr}{\operatorname{tr}}
\DeclareMathOperator{\softmax}{\operatorname{softmax}}
\DeclareMathOperator{\relu}{\operatorname{ReLU}}
\newtheorem{theorem}{Theorem}
\newtheorem{lemma}{Lemma}
\newtheorem{definition}{Definition}
\begin{document}

\title{Higher-Order GNNs Meet Efficiency: Sparse Sobolev Graph Neural Networks}

\author{Jhony H. Giraldo, Aref Einizade, Andjela Todorovic, Jhon A. Castro-Correa, Mohsen Badiey, Thierry Bouwmans, and Fragkiskos D. Malliaros% <-this % stops a space
%\author{Jhony~H.~Giraldo,~Arif~Mahmood,~Dorina~Thanou, and~Thierry~Bouwmans% <-this % stops a space
\thanks{Jhony H. Giraldo, Aref Einizade, and Andjela Todorovic are with the LTCI, Télécom Paris, Institut Polytechnique de Paris, 91120 Palaiseau, France e-mail:  jhony.giraldo@telecom-paris.fr, aref.einizade@telecom-paris.fr, andela.todorovic@ip-paris.fr.}% <-this % stops a space
\thanks{Jhon A. Castro-Correa and Mohsen Badiey are with the Department of Electrical and Computer Engineering, University of Delaware, Newark, DE, USA. E-mail: jcastro@udel.edu, badiey@udel.edu.}% <-this % stops a space
\thanks{Thierry Bouwmans is with the laboratoire MIA, Mathématiques, Image et Applications, La Rochelle Université, 17000 La Rochelle, France e-mail: tbouwman@univ-lr.fr.}
\thanks{Fragkiskos D. Malliaros is with Université Paris-Saclay, CentraleSupélec, Inria, Centre for Visual Computing (CVN), Gif-Sur-Yvette, France. E-mail: fragkiskos.malliaros@centralesupelec.fr.}% <-this % stops a space
\thanks{This work was supported in part by the ANR (French National Research Agency) under the JCJC project GraphIA (ANR-20-CE23-0009-01), by the Office of Naval Research ONR (Grant No. N00014-21-1-2760), and by the center Hi! PARIS.}
}

% The paper headers
\markboth{Signal Processing}%
{Shell \MakeLowercase{\textit{et al.}}: A Sample Article Using IEEEtran.cls for IEEE Journals}

%\IEEEpubid{0000--0000/00\$00.00~\copyright~2021 IEEE}
% Remember, if you use this you must call \IEEEpubidadjcol in the second
% column for its text to clear the IEEEpubid mark.

\maketitle

\begin{abstract}
    Graph Neural Networks (GNNs) have shown great promise in modeling relationships between nodes in a graph, but capturing higher-order relationships remains a challenge for large-scale networks.
    Previous studies have primarily attempted to utilize the information from higher-order neighbors in the graph, involving the incorporation of powers of the shift operator, such as the graph Laplacian or adjacency matrix.
    This approach comes with a trade-off in terms of increased computational and memory demands.
    Relying on graph spectral theory, we make a fundamental observation: \textit{the regular and the Hadamard power of the Laplacian matrix behave similarly in the spectrum}.
    This observation has significant implications for capturing higher-order information in GNNs for various tasks such as node classification and semi-supervised learning.
    Consequently, we propose a novel graph convolutional operator based on the sparse Sobolev norm of graph signals.
    Our approach, known as Sparse Sobolev GNN (S2-GNN), employs Hadamard products between matrices to maintain the sparsity level in graph representations.
    S2-GNN utilizes a cascade of filters with increasing Hadamard powers to generate a diverse set of functions.
    We theoretically analyze the stability of S2-GNN to show the robustness of the model against possible graph perturbations.
    We also conduct a comprehensive evaluation of S2-GNN across various graph mining, semi-supervised node classification, and computer vision tasks.
    In particular use cases, our algorithm demonstrates competitive performance compared to state-of-the-art GNNs in terms of performance and running time.
\end{abstract}

\begin{IEEEkeywords}
Graph neural networks, sparse graph convolutions, higher-order convolutions, graph spectrum, Sobolev norm
\end{IEEEkeywords}

\section{Introduction}
\label{sec:introduction}

\IEEEPARstart{G}{raph} representation learning and its applications have garnered significant attention in recent years \cite{bruna2014spectral,defferrard2016convolutional,kipf2017semi,velickovic2018graph}.
Notably, Graph Neural Networks (GNNs) have emerged as a powerful extension of Convolutional Neural Networks (CNNs) for modeling non-Euclidean data as graphs.
GNNs have been successfully applied in various domains, including semi-supervised learning \cite{kipf2017semi}, clustering \cite{duval2022higherorder}, point cloud semantic segmentation \cite{li2019deepgcns}, misinformation detection \cite{benamira2019semisupervised}, and molecular modeling \cite{faenet-icml23}, among others.

Most GNNs update their node embeddings by performing specific operations within the neighborhood of each node through message passing \cite{zhang2022deep}.
However, this updating rule has limitations when it comes to capturing higher-order\footnote{We use the term ``higher-order" to refer to higher-order hop information (more than one hop).} relationships between nodes since it only leverages information from the immediate neighbors (1-hop) of each vertex.
To address this limitation, previous methods in GNNs have attempted to capture higher-order connections by incorporating powers of the sparse shift operator of the graph, such as the adjacency or Laplacian matrix \cite{frasca2020sign,defferrard2016convolutional}.
Nevertheless, these methods encounter computational and memory issues, attributed either to the densification of the shift operator \cite{frasca2020sign} or a bottleneck during the implementation of recursive graph diffusion operations \cite{defferrard2016convolutional}.
%In contrast, we propose a novel model (S2-GNN) that strikes an optimal balance between running time and accuracy as shown in Fig. \ref{fig:teaser}.
%Our approach uses sparse operations, overcoming the drawbacks associated with previous higher-order techniques.

% \begin{figure}
%     \centering
%     %\includegraphics[width=0.8\columnwidth]{}
%     \includegraphics[width=0.85\columnwidth]{figures/accuracy_vs_time.pdf}
%     \caption{Performance and running time per epoch of several GNN baselines and our model (S2-GNN).}
%     \label{fig:teaser}
% \end{figure}

This work explores a somewhat radical departure from prior approaches in higher-order graph convolutions for GNNs.
We observe that the eigenvalues of the higher-order Laplacian matrix in a weighted graph exhibit similar behavior when subjected to regular and Hadamard power operations (the Hadamard power is also known as element-wise power).
To establish the precise relationship between the spectra of the regular and sparse higher-order Laplacian matrix, we rely on tools from spectral graph theory \cite{chung1997spectral} and the Schur product theorem \cite{horn2012matrix}.
This fundamental observation enables us to design accurate and efficient sparse graph convolutions for GNNs.
More specifically, we propose a novel approach called Sparse Sobolev GNN (S2-GNN), which employs a cascade of sparse higher-order filtering operations.
This allows for the computation of a more diverse set of functions at each layer.
To achieve this goal, we introduce a new sparse Sobolev norm, drawing inspiration from the literature of Graph Signal Processing (GSP) \cite{pesenson2009variational,giraldo2022reconstruction}.

In S2-GNN, the information of the higher-order filtering operations is aggregated using either a linear combination or a Multi-Layer Perceptron (MLP) fusion layer.
We conduct thorough evaluations of S2-GNN in various semi-supervised learning tasks across several domains, including tissue phenotyping in colon cancer histology images \cite{kather2016multi}, text classification of news \cite{lang1995newsweeder}, activity recognition with sensors \cite{anguita2013public}, and recognition of spoken letters \cite{fanty1991spoken}.
Furthermore, we test our algorithm in node classification tasks using benchmark networks commonly employed in the literature \cite{kipf2017semi,hu2020open}.
S2-GNN demonstrates superior or competitive performance compared to a broad range of GNN methods that employ first-order or higher-order graph convolutions \cite{kipf2017semi, defferrard2016convolutional, frasca2020sign}, various types of multi-head attentions \cite{velickovic2018graph, brody2022attentive}, and graph transformer architectures \cite{shi2021masked}.

In this work, we build upon and improve our preliminary study \cite{giraldo2023higher}.
Specifically, we provide a more comprehensive theoretical explanation, along with an extensive experimental evaluation and in-depth analysis of novel findings.
Additionally, we add a rigorous theoretical study about the stability properties of S2-GNN against possible graph perturbations, which is an important and practical issue in real-world scenarios \cite{gama2020stability,parada2023stability}.
Furthermore, we empirically observe that utilizing smooth-learned graphs \cite{kalofolias2019large,pu2021kernel} improves the performance of both S2-GNN and baseline GNN methods compared to the commonly used $k$-Nearest Neighbors ($k$-NN) technique for graph construction.
Smoother graphs exhibit better results as they promote homophily within the graph.

The main contributions of this work can be summarized as follows:
\begin{enumerate}%[leftmargin=*]
    \item We show that the spectrum of higher-order weighted graphs, using the Laplacian matrix, exhibits similar behavior to their sparse counterparts. This finding enables the design of GNN architectures that deliver superior or competitive performance while maintaining efficiency.
    \item We propose S2-GNN, a novel GNN architecture that utilizes a cascade of higher-order sparse filters.
    S2-GNN effectively leverages sparse higher-order operations without excessively increasing complexity.
    \item We rigorously analyze the stability properties of S2-GNN, shedding light on its robustness against possible graph perturbations.
    \item We conduct extensive experimental evaluations on multiple publicly available benchmark datasets and compare S2-GNN against nine state-of-the-art GNNs. Our algorithm outperforms or achieves competitive results compared to prior methods.
\end{enumerate}

The remainder of the paper is organized as follows:
Section \ref{sec:related_works} presents an overview of the related work, while Section \ref{sec:preliminaries} provides an introduction to the preliminary concepts and foundations relevant to this study.
Section \ref{sec:S-SobGNN} provides a detailed explanation of the proposed S2-GNN model and its theoretical stability analysis.
In Section \ref{sec:experiments_results}, we present the experimental framework, results, and discussion.
Finally, Section \ref{sec:conclusions} offers the concluding remarks of the paper.

\section{Related Work}
\label{sec:related_works}

The study of graphs is a well-established field in machine learning and mathematics \cite{chung1997spectral,bronstein2017geometric}.
With the emergence of GNNs in 2014, motivated by the success of CNNs in regular-structured data like images, a wide range of GNN models have been proposed to learn representations of graph-structured data.
Bruna \etal proposed the first modern GNN by extending the convolutional operator of CNNs to graphs \cite{bruna2014spectral}. Defferrard \etal incorporated concepts from GSP to propose localized spectral filtering \cite{defferrard2016convolutional}, while Kipf and Welling approximated the spectral filtering operation to enable efficient Graph Convolution Networks (GCNs) \cite{kipf2017semi}.
These works demonstrated the potential of GNNs and have served as inspiration for the development of various GNN models.

Veličković \etal \cite{velickovic2018graph} proposed Graph Attention Networks (GATs), which employ an attention mechanism to learn different weights for different nodes in the graph.
This enables GATs to effectively model complex relationships between nodes, albeit with increased computational complexity. 
Subsequent works have further explored attention mechanisms in GNNs, such as \cite{kim2021find,brody2022attentive}.

The Simple Graph Convolution (SGC) model was introduced by Wu \etal \cite{wu2019simplifying}, which simplifies the GCN by removing the non-linear activation functions, resulting in improved efficiency but reduced expressivity. 
Later, Chiang \etal \cite{chiang2019cluster} proposed ClusterGCN, a scalable GNN model that leverages graph clustering to enhance efficiency on large-scale graphs.
By partitioning the graph into clusters and applying GNNs to each cluster in parallel, ClusterGCN can learn representations of large-scale graphs.
However, ClusterGCN sacrifices global graph information and the exchange of messages between clusters.
Frasca \etal proposed Scalable Inception Graph Neural Networks (SIGN) \cite{frasca2020sign}, which leverage higher-order relationships in graphs to improve expressivity at the expense of scalability.
SIGN computes the powers of the adjacency matrix, making it unsuitable for large-scale problems due to its complexity.

More recently, graph transformers have emerged as an alternative to GNNs for learning graph-structured data.
Some graph transformers in the literature, similar to GATs, employ attention mechanisms computed in the neighborhood of each node, as shown in \cite{shi2021masked}.
Another approach in graph transformers is to compute attention mechanisms (with or without positional embeddings) by leveraging fully connected graphs \cite{kreuzer2021rethinking}.
However, the latter approach becomes computationally prohibitive for large-scale applications, as the complexity grows quadratically with the number of nodes in the graph.
For a comprehensive review of GNNs, we recommend referring to the survey by Zhang \etal \cite{zhang2022deep}.

Previous studies have focused on improving scalability at the expense of expressivity \cite{wu2019simplifying,chiang2019cluster} or vice versa \cite{defferrard2016convolutional,frasca2020sign}.
Although GAT, its derivatives, and Transformers have been explored, they are not optimal solutions to this problem due to the high computational burden of the attention mechanism and the multi-head function.
As a result, outside the GNN community, the GCN model \cite{kipf2017semi} is the preferred choice due to its simplicity and high efficiency.
In this paper, we propose an accurate and efficient GNN architecture based on higher-order sparse convolutions.
Our algorithm improves accuracy while maintaining low computational and memory footprints.

\section{Preliminaries}
\label{sec:preliminaries}

\subsection{Mathematical Notation}
\label{sec:notation}

In this work, calligraphic letters such as $\mathcal{V}$ denote sets, and $\vert \mathcal{V} \vert$ represents the cardinality of the set. 
Uppercase boldface letters such as $\mathbf{A}$ represent matrices, while lowercase boldface letters such as $\mathbf{x}$ denote vectors. 
$\mathbf{I}$ denotes the identity matrix, and $\mathbf{1}$ represents a vector of ones with appropriate dimensions. 
The superscripts $(\cdot)^{\mathsf{T}}$ and $(\cdot)^{\mathsf{H}}$ correspond to transposition and Hermitian transpose, respectively. 
$\diag(\mathbf{x})$ denotes a diagonal matrix with entries given by the vector $\mathbf{x}=[x_1,x_2,\dots,x_n]^{\mathsf{T}} \in \mathbb{R}^N$.
$\tr(\cdot)$ is the trace of a matrix.
The $\ell_2$-norm of a vector is denoted by $\Vert \cdot \Vert_2$, the Frobenius norm is denoted by $\Vert \cdot \Vert_F$, and the entry-wise norm-$1$ of a matrix is denoted by $\Vert \cdot \Vert_{1,1}$. 
The symbols $\circ$ and $\otimes$ represent the Hadamard and Kronecker products between matrices, respectively.

\subsection{Graph Signals}

A graph is a mathematical entity represented as $G=(\mathcal{V},\mathcal{E})$, where $\mathcal{V}=\{1,\dots,N\}$ is the set of $N$ nodes and ${\mathcal{E}\subseteq \{(i,j)\mid i,j\in \mathcal{V}\;{\textrm {and}}\;i\neq j\}}$ is the set of edges between nodes $i$ and $j$.
The weighted adjacency matrix of the graph is denoted as $\mathbf{A}\in \mathbb{R}^{N\times N}$, where $\mathbf{A}(i,j)=a_{i,j}\in \mathbb{R}_+$ represents the weight of the edge $(i,j)$, and $\mathbf{A}(i,j)=0~\forall~(i,j) \notin \mathcal{E}$.
For undirected graphs, $\mathbf{A}$ is symmetric.
A graph signal is a function $x:\mathcal{V} \to \mathbb{R}$ and is represented as $\mathbf{x} \in \mathbb{R}^N$, where $\mathbf{x}(i)$ is the graph signal evaluated on the $i$th node.
The degree matrix of $G$ is a diagonal matrix given by $\mathbf{D}=\diag(\mathbf{A1})$.
Different definitions exist for the Laplacian matrix, including the combinatorial Laplacian $\mathbf{L=D-A}$, and the symmetric normalized Laplacian $\mathbf{\Delta=I-D}^{-\frac{1}{2}}\mathbf{AD}^{-\frac{1}{2}}$ \cite{ortega2018graph}.
The Laplacian matrix is a positive semi-definite matrix for undirected graphs, with eigenvalues\footnote{$\lambda_N \leq 2$ in the case of the symmetric normalized Laplacian $\mathbf{\Delta}$.} $0=\lambda_1 \leq \lambda_2 \leq \dots \leq \lambda_N$ and corresponding eigenvectors $\{ \mathbf{u}_1,\mathbf{u}_2,\dots,\mathbf{u}_N\}$.
The adjacency and Laplacian matrices, as well as their normalized forms, are examples of the shift operator $\mathbf{S} \in \mathbb{R}^{N \times N}$ in GSP.

The graph Fourier basis of $\mathbf{L}$ is given by the spectral decomposition $\mathbf{L} = \mathbf{U}\mathbf{\Lambda}\mathbf{U}^{\mathsf{T}}$ \cite{ortega2018graph}, where $\mathbf{U}=[\mathbf{u}_1,\mathbf{u}_2,\dots,\mathbf{u}_N]$ and $\mathbf{\Lambda}=\diag([\lambda_1,\lambda_2,\dots,\lambda_N]^{\mathsf{T}})$.
The Graph Fourier Transform (GFT) $\mathbf{\hat{x}}$ of the graph signal $\mathbf{x}$ is defined as $\mathbf{\hat{x}}=\mathbf{U}^{{\mathsf{T}}}\mathbf{x}$, while the inverse GFT is given by $\mathbf{x} = \mathbf{U}\mathbf{\hat{x}}$ \cite{ortega2018graph}.
Although we use the spectral definitions of graphs to establish the relationship between the spectrum of the higher-order Laplacian and its sparse counterpart in this work, the eigenvalue decomposition is not required to implement S2-GNN.

\subsection{Graph Convolutional Filters}

The concept of graph convolutional filters was introduced in the literature of GSP \cite{sandryhaila2013discrete}.
\begin{definition}%[Isufi \etal \cite{isufi2022graph}]
    Given a set of parameters $\mathbf{q} = [q_0, q_1, \dots , q_K]^{\mathsf{T}}$ and a graph signal $\mathbf{x} \in \mathbb{R}^N$, a graph convolutional filter of order $K$ is a linear mapping $Q : \mathbb{R}^N \to \mathbb{R}^N$ comprising a linear combination of $K$ shifted signals:
    \begin{equation}
        Q(\mathbf{x}) = \sum_{k=0}^K q_k \mathbf{S}^k \mathbf{x} = \mathbf{Q}(\mathbf{S}) \mathbf{x},
    \end{equation}
    where $\mathbf{Q}(\mathbf{S}) = \sum_{k=0}^K q_k \mathbf{S}^k$ is the $N \times N$ polynomial filtering matrix.
    \label{dfn:graph_filter}
\end{definition}
The output signal $Q(\mathbf{x})$ at node $i$ is a linear combination of signal values located within up to $K$-hops away from node $i$ given by $q_0 \mathbf{x}(i) + q_1 [\mathbf{Sx}](i) + \dots + q_K [\mathbf{S}^K\mathbf{x}](i)$, where $[\mathbf{S}^k\mathbf{x}](i)$ is the $i$th value of the vector $\mathbf{S}^k\mathbf{x}$.

% \subsection{Inferring Smooth Graphs}
% \label{sec:inferring_smooth_graphs}

\begin{figure*}
    \centering
    \includegraphics[width=0.95\textwidth]{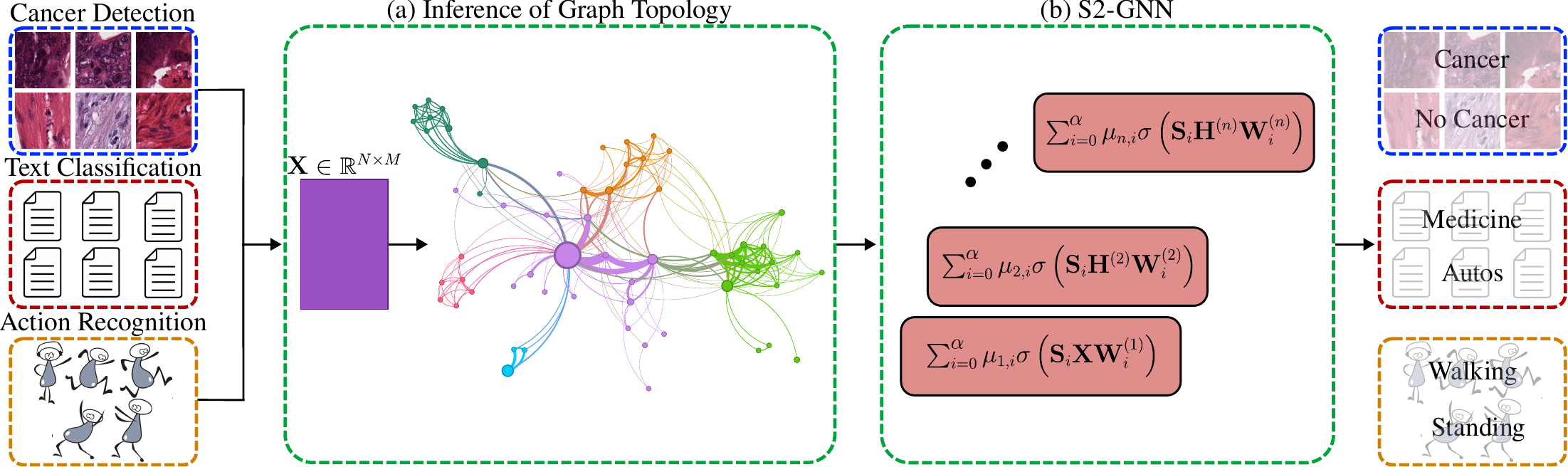}
    \caption{The pipeline of our S2-GNN algorithm. S2-GNN can be used in a broad range of data such as images, text, and videos, among others. However, the step of mapping the original dataset to the data matrix $\mathbf{X}\in \mathbb{R}^{N\times M}$ could be different in each case. Our framework is composed of (a) inference of the graph topology and (b) the S2-GNN architecture.}
    \label{fig:pipeline}
\end{figure*}

\section{Sparse Sobolev Graph Neural Network}
\label{sec:S-SobGNN}

Figure \ref{fig:pipeline} provides an overview of our S2-GNN algorithm.
S2-GNN utilizes either a $k$-Nearest Neighbors ($k$-NN) method or the procedure described in \cite{kalofolias2019large} to establish a graph representation for a specific problem when the structure is unavailable.
In cases where the structure is already known, step a) in Fig. \ref{fig:pipeline} is not required.
Consequently, we employ the proposed GNN architecture to address the primary problem at hand\footnote{In this paper, we focus on the node {and some graph classification tasks}. However, our framework can be easily extended to other problem domains, such as link prediction.}.
Within S2-GNN, a modified sparse Sobolev term serves as the shift operator in the propagation function of the GNN.

\subsection{Sobolev Norm}
\label{sec:sobolev_norm}

The Sobolev norm in GSP has been used as a regularization term to solve problems in 1) video processing \cite{giraldo2020graph}, 2) modeling of infectious diseases \cite{giraldo2020minimization}, and 3) interpolation of graph signals \cite{pesenson2009variational,giraldo2022reconstruction}.
\begin{definition}[Pesenson \cite{pesenson2009variational}]
    \label{dfn:sobolev_norm}
    For fixed parameters $\epsilon \geq 0$ and $\rho\in \mathbb{R}$, the Sobolev norm is defined as $\Vert \mathbf{x} \Vert_{\rho,\epsilon} \triangleq \Vert (\mathbf{L}+\epsilon \mathbf{I})^{\rho/2} \mathbf{x} \Vert$.
    When $\mathbf{L}$ is symmetric, we can rewrite $\Vert \mathbf{x} \Vert_{\rho,\epsilon}^2$ as follows:
    \begin{equation}
        \Vert \mathbf{x} \Vert_{\rho,\epsilon}^2 = \mathbf{x}^{\mathsf{T}}(\mathbf{L}+\epsilon\mathbf{I})^{\rho}\mathbf{x}.
        \label{eqn:sobolev_norm_rewritten}
    \end{equation}
\end{definition}

\noindent We divide the analysis of \eqref{eqn:sobolev_norm_rewritten} into two steps: 1) when $\epsilon=0$, and 2) when $\rho=1$.
For $\epsilon=0$ in (\ref{eqn:sobolev_norm_rewritten}) and considering that $\mathbf{U}$ is orthonormal, we obtain:
\begin{equation}
    \resizebox{0.91\columnwidth}{!}{$\Vert \mathbf{x} \Vert_{\rho,0}^2=\mathbf{x}^{\mathsf{T}}\mathbf{L}^{\rho}\mathbf{x} = \mathbf{x}^{\mathsf{T}}\mathbf{U\Lambda}^{\rho}\mathbf{U}^{\mathsf{T}}\mathbf{x} = \hat{\mathbf{x}}^{\mathsf{T}}\mathbf{\Lambda}^{\rho}\hat{\mathbf{x}}=\sum_{i=1}^N \hat{\mathbf{x}}^2(i) \lambda_i^{\rho}.$}
    \label{eqn:penalized_laplacian}
\end{equation}
Notice that the spectral components $\hat{\mathbf{x}}(i)$ are penalized with powers of the eigenvalues $\lambda_i^{\rho}$ of $\mathbf{L}$.
Since the eigenvalues are ordered in increasing order, the higher frequencies of $\hat{\mathbf{x}}$ are penalized more than the lower frequencies when $\rho=1$, leading to a smooth function in $G$.
For $\rho>1$, the GFT $\hat{\mathbf{x}}$ is penalized with a more diverse set of eigenvalues.
Similarly, we can analyze the adjacency matrix $\mathbf{A}$ using the eigenvalue decomposition $\mathbf{A}^{\rho} = (\mathbf{V\Sigma V}^{\mathsf{H}})^{\rho}=\mathbf{V\Sigma}^\rho\mathbf{V}^{\mathsf{H}}$, where $\mathbf{V}$ is the matrix of eigenvectors, and $\mathbf{\Sigma}$ is the matrix of eigenvalues of $\mathbf{A}$.
For $\mathbf{A}$, the GFT can be defined as $\mathbf{\hat{x}}=\mathbf{V}^{{\mathsf{H}}}\mathbf{x}$.
%In the case of $\mathbf{A}$, the GFT of $\mathbf{x}$ is given by $\mathbf{\hat{x}}=\mathbf{V}^{{\mathsf{H}}}\mathbf{x}$.

For the second analysis of (\ref{eqn:sobolev_norm_rewritten}), when $\rho=1$, we have:
\begin{equation}
    \Vert \mathbf{x} \Vert_{1,\epsilon}^2 = \mathbf{x}^{\mathsf{T}}(\mathbf{L}+\epsilon\mathbf{I})\mathbf{x}.
    \label{eqn:sobole_without_rho}
\end{equation}
The term $(\mathbf{L}+\epsilon\mathbf{I})$ in (\ref{eqn:sobole_without_rho}) is associated with a better condition number\footnote{The condition number $\kappa(\mathbf{L})$ associated with the square matrix $\mathbf{L}$ is a measure of how well- or ill-conditioned is the inversion of $\mathbf{L}$.} than using $\mathbf{L}$ alone.
Better condition numbers are associated with faster convergence rates in gradient descent methods, as demonstrated in \cite{giraldo2022reconstruction}.
For the Laplacian matrix $\mathbf{L}$, we know that $\kappa(\mathbf{L}) = \frac{\vert \lambda_{\text{max}}(\mathbf{L})\vert}{\vert \lambda_{\text{min}}(\mathbf{L})\vert} \approx \frac{\lambda_{\text{max}}(\mathbf{L})}{0} \rightarrow \infty$,
% \begin{equation}
%     \kappa(\mathbf{L}) = \frac{\vert \lambda_{\text{max}}(\mathbf{L})\vert}{\vert \lambda_{\text{min}}(\mathbf{L})\vert} \approx \frac{\lambda_{\text{max}}(\mathbf{L})}{0} \rightarrow \infty,
% \end{equation}
where $\kappa(\mathbf{L})$ is the condition number of $\mathbf{L}$, $\lambda_{\text{max}}(\mathbf{L})$ and $\lambda_{\text{min}}(\mathbf{L})$ are the maximum and minimum eigenvalues of $\mathbf{L}$, respectively.
Since $\kappa(\mathbf{L}) \rightarrow \infty$, relying solely on the Laplacian matrix can result in a poorly conditioned problem, particularly when solving an optimization problem \cite{giraldo2022reconstruction}.
On the other hand, for the Sobolev term, we observe that:
\begin{equation}
    \mathbf{L}+\epsilon\mathbf{I} = \mathbf{U\Lambda U}^{\mathsf{T}}+\epsilon\mathbf{I} = \mathbf{U}(\mathbf{\Lambda}+\epsilon\mathbf{I})\mathbf{U}^{\mathsf{T}}.
\end{equation}
Therefore, $\lambda_{\text{min}}(\mathbf{L}+\epsilon\mathbf{I}) = \epsilon$, \ie $\mathbf{L}+\epsilon\mathbf{I}$ is positive definite ($\mathbf{L} +\epsilon\mathbf{I} \succ 0$) for $\epsilon > 0$, and:
\begin{equation}
    \kappa(\mathbf{L}+\epsilon\mathbf{I}) = \frac{\vert \lambda_{\text{max}}(\mathbf{L}+\epsilon\mathbf{I})\vert}{\vert \lambda_{\text{min}}(\mathbf{L}+\epsilon\mathbf{I})\vert} = \frac{\lambda_{\text{max}}(\mathbf{L})+\epsilon}{\epsilon} < \kappa(\mathbf{L});~\forall~\epsilon>0.
    \label{eqn:cond_number_sob_and_laplacian}
\end{equation}
Namely, $\mathbf{L}+\epsilon\mathbf{I}$ has a better condition number than $\mathbf{L}$.
%It might not be evident why a better condition number could help in GNNs, where the inverses of the Laplacian or adjacency matrices are not required to perform the propagation rules.
The significance of a better condition number in the context of GNNs might not be immediately apparent since the inverses of the Laplacian or adjacency matrices are not required for performing the propagation rules.
However, several studies have highlighted the adverse effects of poorly conditioned matrices.
For instance, Kipf and Welling \cite{kipf2017semi} employed a renormalization trick ($\mathbf{A+I}$) in their filtering operation to avoid exploding/vanishing gradients.
%such that $\mathbf{A+I}$ is used in their filtering operation to avoid exploding/vanishing gradients.
Similarly, Wu \etal \cite{wu2019simplifying} demonstrated that adding the identity matrix to $\mathbf{A}$ shrinks the graph spectral domain, resulting in a low-pass-type filter.

The previous theoretical analysis shows the benefits of the Sobolev norm in two aspects: 1) the computation of diverse frequencies in (\ref{eqn:penalized_laplacian}), and 2) the better condition number in (\ref{eqn:cond_number_sob_and_laplacian}).

\subsection{Sparse Sobolev Norm}
\label{sec:sparse_sob_norm}

\begin{figure}
    \centering
    \includegraphics[width=\columnwidth]{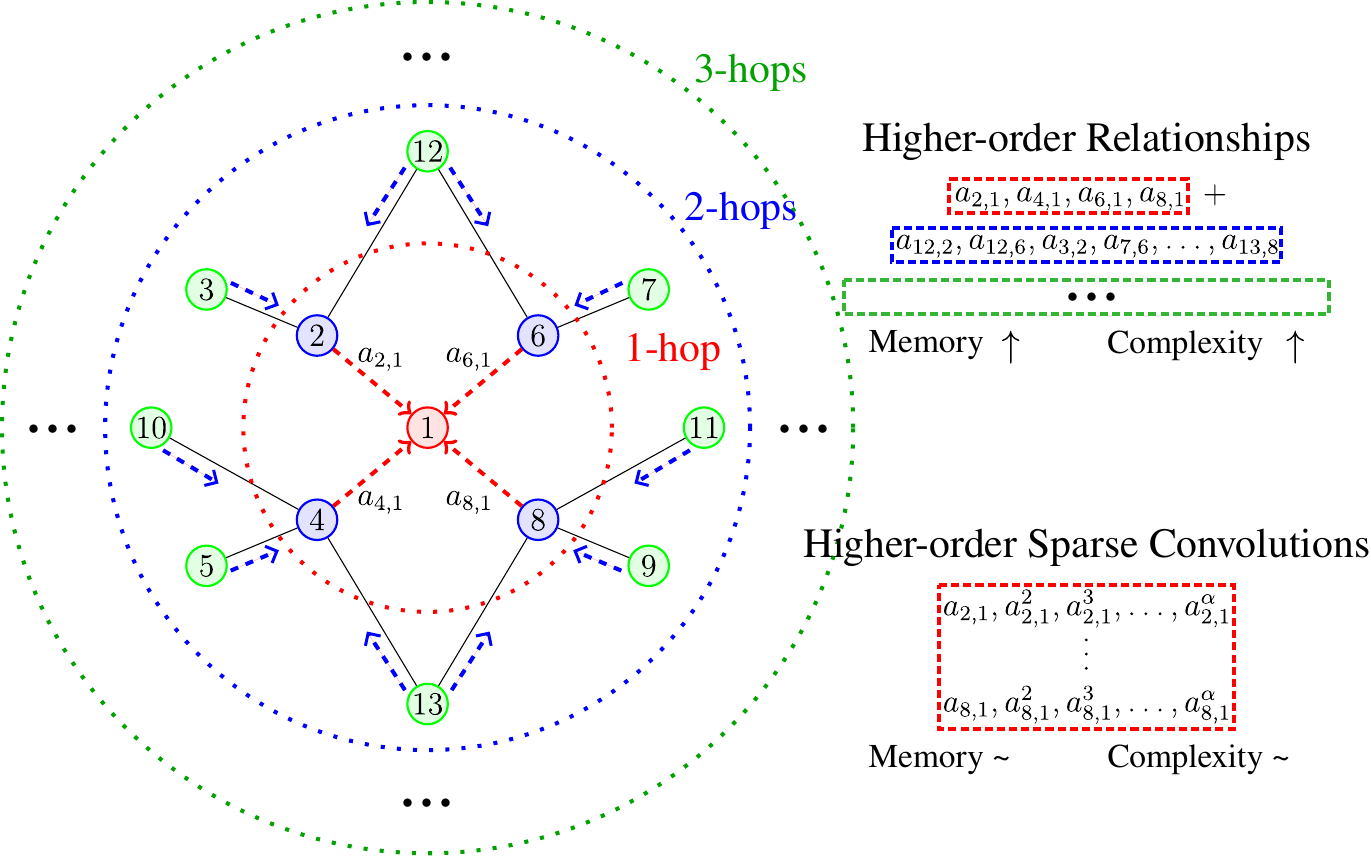}
    \caption{Process of using higher-order relationships in GNNs vs. higher-order sparse convolutions.}
    \label{fig:sparse_operation}
\end{figure}

The use of $\mathbf{L}$, $\mathbf{A}$, or their normalized forms in GNNs is computationally efficient due to their sparsity.
% Therefore, we can perform a small number of sparse matrix operations.
This allows us to perform a small number of sparse matrix operations, which is advantageous for computational efficiency.
However, when considering the Sobolev norm, the term $(\mathbf{L}+\epsilon\mathbf{I})^{\rho}$ can rapidly become a dense matrix for large values of $\rho$, leading to computational and memory issues as illustrated in Fig. \ref{fig:sparse_operation}.
To address this problem, we employ a sparse Sobolev norm that preserves the sparsity level, ensuring computational feasibility and efficient memory usage.
\begin{definition}
    \label{dfn:sparse_sobolev_term}
    Let $\mathbf{L}\in \mathbb{R}^{N\times N}$ be the Laplacian matrix of a graph $G$.
    For fixed parameters $\epsilon \geq 0$ and $\rho \in \mathbb{N}$, the sparse Sobolev term for GNNs is introduced as the $\rho$ Hadamard multiplications of $(\mathbf{L}+\epsilon\mathbf{I})$ (also known as the Hadamard powers) such that:
    \begin{equation}
        (\mathbf{L}+\epsilon\mathbf{I})^{(\rho)}=\underbrace{(\mathbf{L}+\epsilon\mathbf{I}) \circ (\mathbf{L}+\epsilon\mathbf{I}) \circ \dots \circ (\mathbf{L}+\epsilon\mathbf{I})}_{\rho \text{ times}}.
        \label{eqn:sparse_sobolev_term}
    \end{equation}
    For example, $(\mathbf{L}+\epsilon\mathbf{I})^{(2)}=(\mathbf{L}+\epsilon\mathbf{I}) \circ (\mathbf{L}+\epsilon\mathbf{I})$.
    Thus, the sparse Sobolev norm is given by:
    \begin{equation}
        \Vert \mathbf{x} \Vert_{(\rho),\epsilon} \triangleq \Vert (\mathbf{L}+\epsilon \mathbf{I})^{(\rho/2)} \mathbf{x} \Vert.
        \label{eqn:sparse_sobolev_norm}
    \end{equation}
\end{definition}

Let $\langle \mathbf{x}, \mathbf{y} \rangle_{(\rho),\epsilon} = \mathbf{x}^{\mathsf{T}}(\mathbf{L}+\epsilon\mathbf{I})^{(\rho)}\mathbf{y}$ be the inner product between two graph signals $\mathbf{x}$ and $\mathbf{y}$ that induces the associated sparse Sobolev norm.
\begin{theorem}
    The sparse Sobolev norm $\Vert \mathbf{x} \Vert_{(\rho),\epsilon} \triangleq \Vert (\mathbf{L}+\epsilon \mathbf{I})^{(\rho/2)} \mathbf{x} \Vert$ satisfies the basic properties of vector norms for $\epsilon>0$ (for $\epsilon=0$, we obtain a semi-norm).\\
    Proof: See Appendix \ref{app:proof_SSob_norm}.
    \label{trm:SSob_norm}
\end{theorem}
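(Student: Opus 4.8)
The plan is to reduce the whole statement to a single structural fact: the matrix $\mathbf{M}_{\rho,\epsilon} \triangleq (\mathbf{L}+\epsilon\mathbf{I})^{(\rho)}$ that induces the inner product $\langle \mathbf{x},\mathbf{y}\rangle_{(\rho),\epsilon} = \mathbf{x}^{\mathsf{T}}\mathbf{M}_{\rho,\epsilon}\mathbf{y}$ is symmetric and positive definite whenever $\epsilon>0$. Once this is established, $\langle\cdot,\cdot\rangle_{(\rho),\epsilon}$ is a genuine inner product on $\mathbb{R}^N$, and the norm axioms for $\|\mathbf{x}\|_{(\rho),\epsilon}=\sqrt{\langle\mathbf{x},\mathbf{x}\rangle_{(\rho),\epsilon}}$ follow from the standard theory of inner-product spaces, so the bulk of the work is concentrated in the definiteness step.

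First I would record the easy structural facts. Since $\mathbf{L}$ is symmetric for undirected graphs, $\mathbf{L}+\epsilon\mathbf{I}$ is symmetric, and because the Hadamard product of symmetric matrices is symmetric, $\mathbf{M}_{\rho,\epsilon}$ is symmetric for every $\rho\in\mathbb{N}$. As already established in the discussion around \eqref{eqn:cond_number_sob_and_laplacian}, the smallest eigenvalue of $\mathbf{L}+\epsilon\mathbf{I}$ equals $\epsilon$, so $\mathbf{L}+\epsilon\mathbf{I}\succ 0$ for $\epsilon>0$. The crux is to propagate positive definiteness through the Hadamard powers, which I would do by induction on $\rho$ via the Schur product theorem: the base case $\rho=1$ is exactly $\mathbf{L}+\epsilon\mathbf{I}\succ 0$, and in the inductive step $\mathbf{M}_{\rho,\epsilon}=\mathbf{M}_{\rho-1,\epsilon}\circ(\mathbf{L}+\epsilon\mathbf{I})$ is the Hadamard product of two positive-definite matrices, hence positive definite. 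This gives $\mathbf{x}^{\mathsf{T}}\mathbf{M}_{\rho,\epsilon}\mathbf{x}>0$ for all $\mathbf{x}\neq\mathbf{0}$.

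With $\mathbf{M}_{\rho,\epsilon}$ symmetric positive definite, verifying the three norm properties is routine. Non-negativity and point-separation follow directly from positive definiteness, since $\|\mathbf{x}\|_{(\rho),\epsilon}^2=\mathbf{x}^{\mathsf{T}}\mathbf{M}_{\rho,\epsilon}\mathbf{x}\geq 0$ with equality if and only if $\mathbf{x}=\mathbf{0}$. Absolute homogeneity, $\|\alpha\mathbf{x}\|_{(\rho),\epsilon}=|\alpha|\,\|\mathbf{x}\|_{(\rho),\epsilon}$, is immediate by factoring $\alpha^2$ out of the quadratic form. The triangle inequality I would obtain from the Cauchy--Schwarz inequality $|\langle\mathbf{x},\mathbf{y}\rangle_{(\rho),\epsilon}|\leq\|\mathbf{x}\|_{(\rho),\epsilon}\,\|\mathbf{y}\|_{(\rho),\epsilon}$, valid for any inner product, by expanding $\|\mathbf{x}+\mathbf{y}\|_{(\rho),\epsilon}^2$ and bounding the cross term.

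For the boundary case $\epsilon=0$, the matrix $\mathbf{L}^{(\rho)}$ is only positive \emph{semi}definite: the Schur product theorem still preserves semidefiniteness, but since $\mathbf{L}$ itself is singular the definiteness argument breaks down, so point-separation may fail while the remaining properties persist, yielding a semi-norm. I expect the positive-definiteness step to be the main obstacle, because a direct spectral argument is unavailable here — the eigenvalues of a Hadamard power are not simply related to those of $\mathbf{L}+\epsilon\mathbf{I}$ — which is exactly why the Schur product theorem, rather than the eigendecomposition used elsewhere in the paper, is the right tool.
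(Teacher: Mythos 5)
Your proposal is correct and follows essentially the same route as the paper's proof: positive definiteness of $(\mathbf{L}+\epsilon\mathbf{I})^{(\rho)}$ via the Schur product theorem (which you make slightly more explicit by phrasing it as an induction on $\rho$), the triangle inequality via Cauchy--Schwarz on the induced inner product, homogeneity by factoring out the scalar, and the degeneration to a semi-norm at $\epsilon=0$ because $\mathbf{L}$ is singular. The only cosmetic difference is that you package the argument as ``symmetric positive-definite matrix $\Rightarrow$ inner product $\Rightarrow$ norm,'' whereas the paper checks the three axioms one by one; the underlying mathematics is identical.
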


The sparse Sobolev term in (\ref{eqn:sparse_sobolev_term}) possesses the property of maintaining the same sparsity level as $\mathbf{L}+\epsilon\mathbf{I}$ for any value of $\rho$.
It is worth noting that $(\mathbf{L}+\epsilon\mathbf{I})^{\rho}$ is equivalent to the sparse Sobolev term under two conditions: 1) restricting $\rho$ to be in $\mathbb{N}$, and 2) replacing the conventional matrix multiplication with the Hadamard product.
While the theoretical properties of the Sobolev norm in (\ref{eqn:penalized_laplacian}) and (\ref{eqn:cond_number_sob_and_laplacian}) do not directly extend to its sparse counterpart, we can gain some theoretical insights by leveraging the concepts of Kronecker products and the Schur product theorem \cite{horn2012matrix}.

\begin{theorem}
    Let $\mathbf{L}$ be any Laplacian matrix of a graph with eigenvalue decomposition $\mathbf{L}=\mathbf{U\Lambda U}^{\mathsf{T}}$. We observe that:
    \begin{equation}
        \resizebox{0.91\columnwidth}{!}{$\mathbf{L}^{(\rho)} = \mathbf{P}_{(\rho)}^{\mathsf{T}} (\mathbf{U} \otimes \mathbf{U}_{(\rho-1)})(\mathbf{\Lambda} \otimes \mathbf{\Lambda}_{(\rho-1)})(\mathbf{U}^{\mathsf{T}} \otimes \mathbf{U}_{(\rho-1)}^{\mathsf{T}})\mathbf{P}_{(\rho)},$}
        \label{eqn:spectrum_hadamard_product}
    \end{equation}
    where $\mathbf{U}_{(\rho-1)}$ and $\mathbf{\Lambda}_{(\rho-1)}$ are respectively the matrices of eigenvectors and eigenvalues of $\mathbf{L}^{(\rho-1)}$, $\mathbf{P}_{(\rho)} \in \{0,1\}^{N^2 \times N}$ is a partial permutation matrix, and:
    \begin{equation}
        \mathbf{L}^{(2)} = \mathbf{P}_N^{\mathsf{T}} (\mathbf{U} \otimes \mathbf{U})(\mathbf{\Lambda} \otimes \mathbf{\Lambda})(\mathbf{U}^{\mathsf{T}} \otimes \mathbf{U}^{\mathsf{T}})\mathbf{P}_N.
    \end{equation}
    Proof: See Appendix \ref{app:proof_spectrum_hadamard_product}.
    \label{trm:spectrum_hadamard_product}
\end{theorem}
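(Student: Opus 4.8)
The plan is to reduce the Hadamard power to a single ordinary Kronecker product and then propagate the spectral decompositions through it via the mixed-product property. The cornerstone is the classical fact, underlying the Schur product theorem \cite{horn2012matrix}, that the Hadamard product of two matrices is a principal submatrix of their Kronecker product. Concretely, I would introduce the partial permutation matrix $\mathbf{P}_N \in \{0,1\}^{N^2 \times N}$ whose $i$-th column is the standard basis vector $\mathbf{e}_{(i-1)N+i}$ of $\mathbb{R}^{N^2}$, so that conjugation by $\mathbf{P}_N$ extracts the ``diagonal'' index set $\{(i-1)N+i : i=1,\dots,N\}$.

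First I would establish the single-step selection identity: for any $\mathbf{A},\mathbf{B}\in\mathbb{R}^{N\times N}$,
\begin{equation}
    \mathbf{A}\circ\mathbf{B} = \mathbf{P}_N^{\mathsf{T}}(\mathbf{A}\otimes\mathbf{B})\mathbf{P}_N .
\end{equation}
This follows by a direct index computation: indexing the rows and columns of $\mathbf{A}\otimes\mathbf{B}$ by pairs with flattened index $(i-1)N+j$, the entry in position $((i,j),(k,l))$ equals $a_{ik}b_{jl}$, so the $(i,k)$ entry of the right-hand side is $\mathbf{e}_{(i-1)N+i}^{\mathsf{T}}(\mathbf{A}\otimes\mathbf{B})\mathbf{e}_{(k-1)N+k}=a_{ik}b_{ik}=(\mathbf{A}\circ\mathbf{B})_{ik}$. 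The step that demands care is fixing this indexing convention and verifying that the diagonal selection is exactly right; once it is pinned down, the remainder is bookkeeping.

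Next I would apply this identity to $\mathbf{L}^{(\rho)}$. Since the Hadamard product is associative and commutative, $\mathbf{L}^{(\rho)}=\mathbf{L}\circ\mathbf{L}^{(\rho-1)}$, and because Hadamard products of symmetric matrices remain symmetric, $\mathbf{L}^{(\rho-1)}$ admits a real spectral decomposition $\mathbf{L}^{(\rho-1)}=\mathbf{U}_{(\rho-1)}\mathbf{\Lambda}_{(\rho-1)}\mathbf{U}_{(\rho-1)}^{\mathsf{T}}$. Taking $\mathbf{A}=\mathbf{L}=\mathbf{U}\mathbf{\Lambda}\mathbf{U}^{\mathsf{T}}$ and $\mathbf{B}=\mathbf{L}^{(\rho-1)}$ in the selection identity and invoking the mixed-product property $(\mathbf{A}\mathbf{C})\otimes(\mathbf{B}\mathbf{D})=(\mathbf{A}\otimes\mathbf{B})(\mathbf{C}\otimes\mathbf{D})$ twice, I obtain
\begin{equation}
    \mathbf{L}\otimes\mathbf{L}^{(\rho-1)} = (\mathbf{U}\otimes\mathbf{U}_{(\rho-1)})(\mathbf{\Lambda}\otimes\mathbf{\Lambda}_{(\rho-1)})(\mathbf{U}^{\mathsf{T}}\otimes\mathbf{U}_{(\rho-1)}^{\mathsf{T}}),
\end{equation}
and conjugating by $\mathbf{P}_N$ yields \eqref{eqn:spectrum_hadamard_product} with $\mathbf{P}_{(\rho)}=\mathbf{P}_N$. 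The base case $\rho=2$ is the specialization in which $\mathbf{L}^{(1)}=\mathbf{L}$, so $\mathbf{U}_{(1)}=\mathbf{U}$ and $\mathbf{\Lambda}_{(1)}=\mathbf{\Lambda}$, recovering the stated formula for $\mathbf{L}^{(2)}$.

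I expect the only genuine obstacle to be the precise construction and verification of the partial permutation matrix $\mathbf{P}_N$ together with the diagonal-selection identity; propagating the eigendecompositions through the Kronecker product is then a routine application of the mixed-product property, and no induction beyond the single recursion $\mathbf{L}^{(\rho)}=\mathbf{L}\circ\mathbf{L}^{(\rho-1)}$ is required, since the statement is expressed directly in terms of the spectral data of $\mathbf{L}^{(\rho-1)}$.
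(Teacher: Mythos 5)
Your proof is correct and follows essentially the same route as the paper's: expressing the Hadamard product as a principal compression of the Kronecker product via the partial permutation matrix $\mathbf{P}_N$, propagating the eigendecompositions through the Kronecker product, and applying this to the recursion $\mathbf{L}^{(\rho)}=\mathbf{L}\circ\mathbf{L}^{(\rho-1)}$. The only difference is cosmetic: you verify the selection identity $\mathbf{A}\circ\mathbf{B}=\mathbf{P}_N^{\mathsf{T}}(\mathbf{A}\otimes\mathbf{B})\mathbf{P}_N$ by direct index computation, whereas the paper cites it as Visick's theorem, and you correctly observe that $\mathbf{P}_{(\rho)}$ is in fact the same matrix $\mathbf{P}_N$ for every $\rho$.
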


% \begin{lemma}
%     Let $\mathbf{L}$ be any Laplacian matrix of a graph with eigenvalue decomposition $\mathbf{L}=\mathbf{U\Lambda U}^{\mathsf{T}}$, we have that:
%     \begin{equation}
%         \mathbf{L} \circ \mathbf{L} = \mathbf{L}^{(2)} = \mathbf{P}_N^{\mathsf{T}} (\mathbf{U} \otimes \mathbf{U})(\mathbf{\Lambda} \otimes \mathbf{\Lambda})(\mathbf{U}^{\mathsf{T}} \otimes \mathbf{U}^{\mathsf{T}})\mathbf{P}_N,
%         \label{eqn:spectrum_hadamard_product}
%     \end{equation}
%     where $\mathbf{P}_N \in \{0,1\}^{N^2 \times N}$ is a partial permutation matrix.\\
%     Proof: See Appendix \ref{app:proof_spectrum_hadamard_product}.
%     \label{trm:spectrum_hadamard_product}
% \end{lemma}

\noindent Theorem \ref{trm:spectrum_hadamard_product} provides the closed-form solution for the spectrum of the Hadamard power of the Laplacian matrix.
It reveals that the spectrum of the Hadamard product is a compressed representation of the Kronecker product of its spectral components.
In the case of the sparse Sobolev term utilized in S2-GNN, denoted by $(\mathbf{L}+\epsilon\mathbf{I})^{(\rho)}$, the spectral components of the graph undergo changes for each value of $\rho$, as shown in \eqref{eqn:spectrum_hadamard_product}.
%The spectrum of the sparse Sobolev term $(\mathbf{L}+\epsilon\mathbf{I})^{(\rho)}$ can be trivially computed since the matrix of eigenvectors of $\mathbf{L}$ is orthonormal.

%For the condition number of the Hadamard powers, we can use the Schur product theorem as follows:
%\begin{theorem}(Schur product theorem)
%    \label{thr:schur_theorem}
%    Let $\mathbf{S},\mathbf{T} \in \mathbb{R}^{n\times n}$ be two positive definite matrices of dimension $n\times n$, then $\mathbf{S} \circ \mathbf{T}$ is also positive definite.\\
%    Proof: see \cite{horn2012matrix}.
%\end{theorem}

By applying the Schur product theorem, which states that the Hadamard product between two positive definite matrices is also positive definite, we can analyze the condition number of the Hadamard powers \cite{horn2012matrix}.
%(see Appendix \ref{app:proof_SSob_norm}, Theorem \ref{thr:schur_theorem}).
We know that $(\mathbf{L}+\epsilon\mathbf{I})^{(\rho)} \succ 0;~\forall~\epsilon > 0$ since $(\mathbf{L}+\epsilon\mathbf{I}) \succ 0;~\forall~\epsilon > 0$, and therefore $\kappa((\mathbf{L}+\epsilon\mathbf{I})^{(\rho)}) < \infty$.
For the adjacency matrix $\mathbf{A}$, its eigenvalues lie within the interval $[-d,d]$, where $d$ is the maximum degree of the vertices in $G$ (see Theorem 8.5 in \cite{nica2018spectral}).
Therefore, we can bound the eigenvalues of $\mathbf{A}$ into the interval $[-1,1]$ by normalizing the adjacency matrix such that $\mathbf{A}_N=\mathbf{D}^{-\frac{1}{2}}\mathbf{AD}^{-\frac{1}{2}}$.
Thus, we can establish that $\mathbf{A}_N+\epsilon \mathbf{I} \succ 0;~\forall~\epsilon > 1$, and subsequently, $(\mathbf{A}_N+\epsilon \mathbf{I})^{(\rho)}\succ 0;~\forall~\epsilon > 1$. 
We can assert that the theoretical developments presented in Section \ref{sec:sobolev_norm} for the sparse Sobolev norm hold to some extent, including a more diverse set of frequencies and a better condition number.
Figure \ref{fig:spectrum_sparse_non_sparse} depicts five normalized eigenvalue penalizations for $\mathbf{L}^{\rho}$ (non-sparse) and $\mathbf{L}^{(\rho)}$ (sparse), revealing that the normalized spectrum of $\mathbf{L}^{\rho}$ and $\mathbf{L}^{(\rho)}$ exhibit a high degree of similarity.
%Similarly, we should work with weighted graphs when using the adjacency matrix since $\mathbf{A}^{(\rho)} = \mathbf{A}~\forall~\rho \in \mathbb{N}$ for unweighted graphs.

\begin{figure}
    \centering
    \includegraphics[width=\columnwidth]{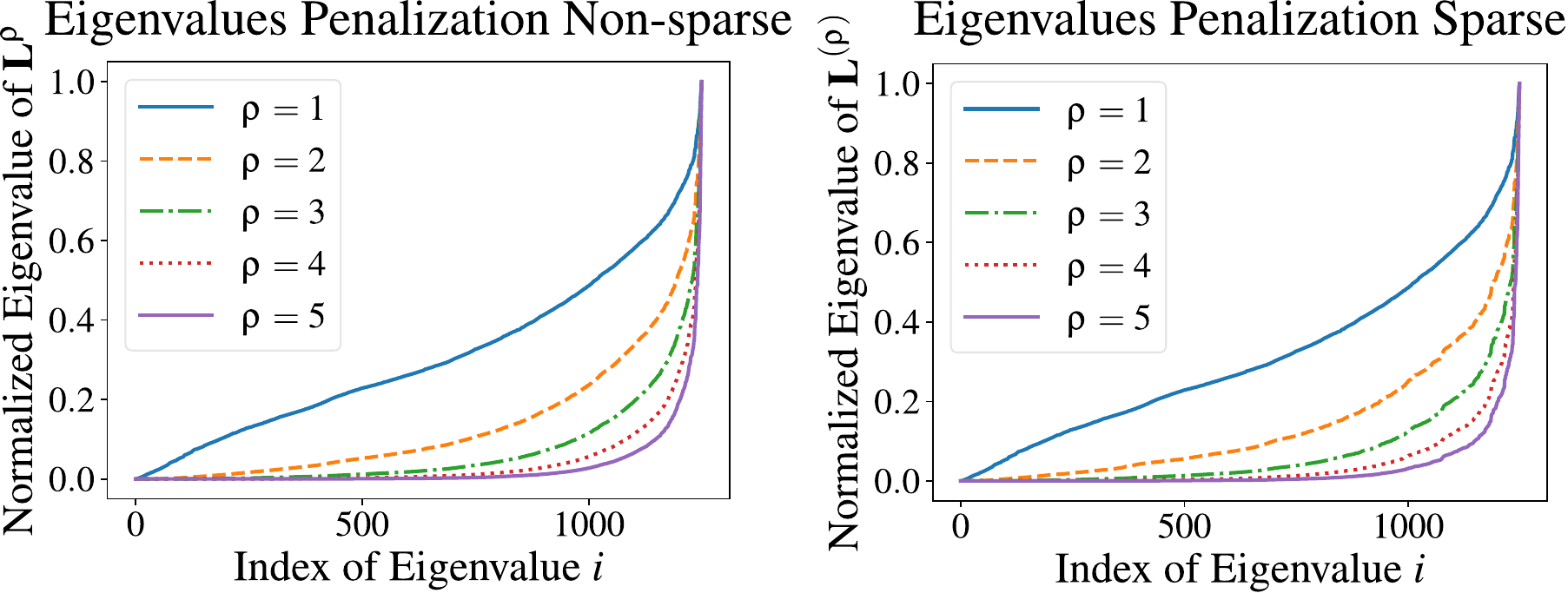}
    \caption{Eigenvalues penalization for the non-sparse and sparse matrix multiplications of the combinatorial Laplacian matrix.}
    \label{fig:spectrum_sparse_non_sparse}
\end{figure}

\subsection{Graph Sparse Sobolev Layer}

In the S2-GNN architecture, we introduce the propagation rule using the graph convolution filter in Definition \ref{dfn:graph_filter}.
Let $\mathbf{S} \in \mathbb{R}^{N \times N}$ be the shift operator of a graph, and let $\mathbf{H}^{(l)} \in \mathbb{R}^{N \times H_l}$ be the output of the previous layer of the GNN (matrix of node embeddings), such that $\mathbf{H}^{(1)}=\mathbf{X} \in \mathbb{R}^{N \times M}$ is an $M$-dimensional graph signal (matrix of features). The propagation rule of the $\rho$th branch of S2-GNN is defined as:
\begin{equation}
    \mathbf{B}_{\rho}^{(l+1)} = \sigma \left( \mathbf{S}_{\rho} {\mathbf{H}}^{(l)}\mathbf{W}_{\rho}^{(l)} \right),
    \label{eqn:propagation_rule_sobolev}
\end{equation}
where $\sigma(\cdot)$ is some activation function, $\mathbf{S}_{\rho} = \mathbf{S}^{(\rho)}$, and $\mathbf{W}_{\rho}^{(l)} \in \mathbb{R}^{H_l \times H_ {l+1}}$ is the matrix of trainable weights in the $\rho$th branch of the layer $l$.
The matrix $\mathbf{W}_{\rho}^{(l)}$ corresponds to a Multi-Layer Perceptron (MLP) with $H_ {l+1}$ hidden units. In practice, we typically include a bias term in the MLP; however, we have omitted it here for simplicity.
% \begin{definition}
%     Let $\mathbf{S} \in \mathbb{R}^{N \times N}$ be the shift operator of a graph, and let $\mathbf{H}^{(l)} \in \mathbb{R}^{N \times H_l}$ be the output of the previous layer of the GNN (matrix of node embeddings), such that $\mathbf{H}^{(1)}=\mathbf{X} \in \mathbb{R}^{N \times M}$ is an $M$-dimensional graph signal (matrix of features). The propagation rule of the $\rho$th branch of S2-GNN is defined as:
%     \begin{equation}
%         \mathbf{B}_{\rho}^{(l+1)} = \sigma \left( \mathbf{S}_{\rho} {\mathbf{H}}^{(l)}\mathbf{W}_{\rho}^{(l)} \right),
%         \label{eqn:propagation_rule_sobolev}
%     \end{equation}
%     where $\sigma(\cdot)$ is some activation function, $\mathbf{S}_{\rho} = \mathbf{S}^{(\rho)}$, and $\mathbf{W}_{\rho}^{(l)} \in \mathbb{R}^{H_l \times H_ {l+1}}$ is the matrix of trainable weights in the $\rho$th branch of the layer $l$.
%     The matrix $\mathbf{W}_{\rho}^{(l)}$ corresponds to a Multi-Layer Perceptron (MLP) with $H_ {l+1}$ hidden units. In practice, we typically include a bias term in the MLP; however, we have omitted it here for simplicity.
% \end{definition}
Notice that \eqref{eqn:propagation_rule_sobolev} is an approximation of the graph convolutional filter in Definition \ref{dfn:graph_filter} when substituting the regular matrix power by the Hadamard power, $q_i = 1$ for $i=\rho$, $q_i = 0;~\forall~i \neq \rho$, the input signal is a $H_l$-dimensional graph signal, and adding an MLP at the end of the filtering process.

The S2-GNN architecture consists of multiple layers, where each layer involves computing a cascade of $\alpha+1$ propagation rules \eqref{eqn:propagation_rule_sobolev} with increasing Hadamard powers.
Finally, the outputs from all branches are combined using some fusion mechanism, such as a linear combination layer or an MLP fusion technique.

\noindent \textbf{Linear combination layer:} Let $\mathbf{B}_{\rho}^{(l+1)} \in \mathbb{R}^{N \times H_{l+1}}$ be the output of the $\rho$th branch in the layer $l$ of S2-GNN.
The output of the $l$th layer of S2-GNN using the linear combination layer is given by:
\begin{equation}
    \label{eqn:linear_combination_layer}
    \mathbf{H}^{(l+1)} = \sum_{i=0}^{\alpha} \mu_{l,i} \mathbf{B}_{i}^{(l+1)},
\end{equation}
where $\mu_{l,i}$ is the learnable weight associated with $\mathbf{B}_{i}^{(l+1)}$, and we consider $\mathbf{B}_{0}^{(l+1)} = \sigma \left( \mathbf{I} {\mathbf{H}}^{(l)}\mathbf{W}_{0}^{(l)} \right) = \sigma \left( {\mathbf{H}}^{(l)}\mathbf{W}_{0}^{(l)} \right)$.

% \begin{definition}
%     Let $\mathbf{B}_{\rho}^{(l+1)} \in \mathbb{R}^{N \times H_{l+1}}$ be the output of the $\rho$th branch in the layer $l$ of S2-GNN.
%     The output of the $l$th layer of S2-GNN using the linear combination layer is given by:
%     \begin{equation}
%         \mathbf{H}^{(l+1)} = \sum_{i=0}^{\alpha} \mu_{l,i} \mathbf{B}_{i}^{(l+1)},
%     \end{equation}
%     where $\mu_{l,i}$ is the learnable weight associated with $\mathbf{B}_{i}^{(l+1)}$, and we consider $\mathbf{B}_{0}^{(l+1)} = \sigma \left( \mathbf{I} {\mathbf{H}}^{(l)}\mathbf{W}_{0}^{(l)} \right) = \sigma \left( {\mathbf{H}}^{(l)}\mathbf{W}_{0}^{(l)} \right)$.
%     \label{dfn:linear_combination_layer}
% \end{definition}

\noindent \textbf{MLP fusion:} Let $\mathbf{B}_{\rho}^{(l+1)} \in \mathbb{R}^{N \times H_{l+1}}$ be the output of the $\rho$th branch in the layer $l$ of S2-GNN.
The output of the $l$th layer of S2-GNN using the MLP fusion is given by:
\begin{equation}
    \label{eqn:MLP_fusion_layer}
    \mathbf{H}^{(l+1)} = \left[\mathbf{B}_{0}^{(l+1)},\mathbf{B}_{1}^{(l+1)}, \dots, \mathbf{B}_{\alpha}^{(l+1)} \right] \mathbf{W}_{\text{MLP}}^{(l)},
\end{equation}
where $\left[\mathbf{B}_{0}^{(l+1)},\mathbf{B}_{1}^{(l+1)}, \dots, \mathbf{B}_{\alpha}^{(l+1)} \right]$ is a concatenation of the matrices $\mathbf{B}_{i}^{(l+1)}~\forall~0 \leq i \leq \alpha$, $\mathbf{B}_{0}^{(l+1)} = \sigma \left( {\mathbf{H}}^{(l)}\mathbf{W}_{0}^{(l)} \right)$, and $\mathbf{W}_{\text{MLP}}^{(l)} \in \mathbb{R}^{(\alpha+1) H_{l+1} \times H_{l+1}}$ is the matrix of learnable weights to combine the $\alpha+1$ branches.

% \begin{definition}
%     Let $\mathbf{B}_{\rho}^{(l+1)} \in \mathbb{R}^{N \times H_{l+1}}$ be the output of the $\rho$th branch in the layer $l$ of S2-GNN.
%     The output of the $l$th layer of S2-GNN using the MLP fusion is given by:
%     \begin{equation}
%         \mathbf{H}^{(l+1)} = \left[\mathbf{B}_{0}^{(l+1)},\mathbf{B}_{1}^{(l+1)}, \dots, \mathbf{B}_{\alpha}^{(l+1)} \right] \mathbf{W}_{\text{MLP}}^{(l)},
%     \end{equation}
%     where $\left[\mathbf{B}_{0}^{(l+1)},\mathbf{B}_{1}^{(l+1)}, \dots, \mathbf{B}_{\alpha}^{(l+1)} \right]$ is a concatenation of the matrices $\mathbf{B}_{i}^{(l+1)}~\forall~0 \leq i \leq \alpha$, $\mathbf{B}_{0}^{(l+1)} = \sigma \left( {\mathbf{H}}^{(l)}\mathbf{W}_{0}^{(l)} \right)$, and $\mathbf{W}_{\text{MLP}}^{(l)} \in \mathbb{R}^{(\alpha+1) H_{l+1} \times H_{l+1}}$ is the matrix of learnable weights to combine the $\alpha+1$ branches.
%     \label{dfn:MLP_fusion_layer}
% \end{definition}

It is worth noting that a layer of S2-GNN, as described in \eqref{eqn:linear_combination_layer} or \eqref{eqn:MLP_fusion_layer}, can be viewed as a mapping $H: \mathbb{R}^{N \times H_{l}} \to \mathbb{R}^{N \times H_{l+1}}$.
In practice, the choice of the fusion mechanism is determined by a hyperparameter optimization algorithm that selects the most suitable mechanism for a given dataset.
{Other techniques, like attention-based fusion methods \cite{peng2021attention} could also be applied into our model.
However, such techniques could increase the overall complexity of S2-GNN due to the computation of the attention mechanism.}

\subsubsection{Re-normalization Trick and Sobolev Term}

In practice, we utilize a normalized Sobolev version of the adjacency matrix as the shift operator for computing our propagation rule.
Specifically, we set $\mathbf{S}_{(\rho)}=\bar{\mathbf{D}}_{\rho}^{-\frac{1}{2}}\bar{\mathbf{A}}_{\rho}\bar{\mathbf{D}}_{\rho}^{-\frac{1}{2}}$, where $\bar{\mathbf{A}}_{\rho}=\left( \mathbf{A}+\epsilon\mathbf{I} \right)^{(\rho)}$ represents the $\rho$th sparse Sobolev term of $\mathbf{A}$ and $\bar{\mathbf{D}}_{\rho}$ denotes the degree matrix of $\bar{\mathbf{A}}_{\rho}$. 
This operation can be interpreted from a message-passing perspective, where we aggregate the embeddings from the 1-hop neighborhood with weights proportional to the edge attributes, node degrees, and the Hadamard power involved in that operation.
Unlike the GCN operation, our approach incorporates the Hadamard power term.
Our operation does not compute attention coefficients like GAT or graph Transformers, but it does learn the importance of each branch through the fusion layer, either by using a linear combination or an MLP operation.

\begin{figure*}
    \centering
    \includegraphics[width=0.9\textwidth]{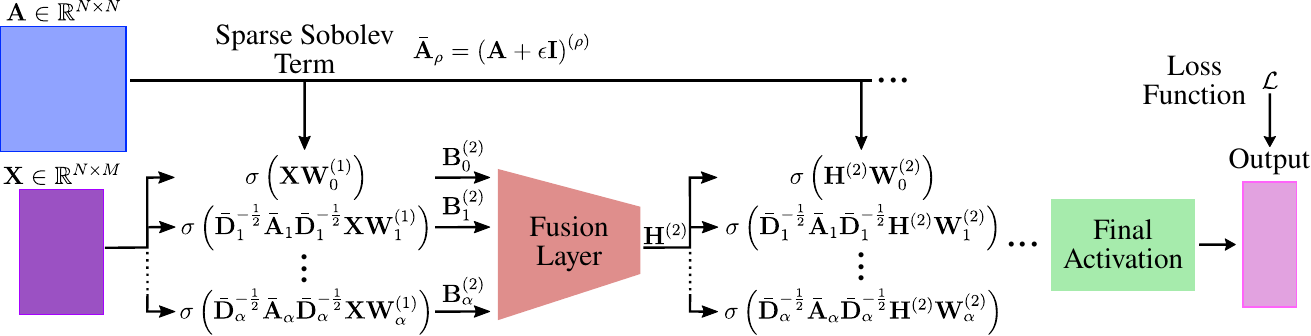}
    \caption{Basic configuration of our S2-GNN architecture with $n$ layers and $\alpha$ branches per layer.}
    \label{fig:S-SobGNN_Architecture}
\end{figure*}

Figure \ref{fig:S-SobGNN_Architecture} illustrates the basic configuration of S2-GNN, serving as the core for constructing more complex architectures based on our filtering operation.
Additional components such as batch normalization and residual connections can be added.
Notice that our graph convolution is efficiently computed since the term $\bar{\mathbf{D}}_{\rho}^{-\frac{1}{2}}\bar{\mathbf{A}}_{\rho}\bar{\mathbf{D}}_{\rho}^{-\frac{1}{2}};~\forall~\rho \in \{1,2,\dots,\alpha\}$ remains the same across all layers (allowing for offline computation), and these terms remain sparse for any value of $\rho$ (given the sparsity of $\mathbf{A}$).
S2-GNN can employ conventional activation functions like $\relu$ or $\softmax$, as well as typical loss functions $\mathcal{L}$ suited to the specific task.
%S2-GNN uses $\relu$ as the activation function for each filter, $\softmax$ at the end of the network, and the cross-entropy error in (\ref{eqn:cross_entropy}) as the loss function.
The basic architecture of S2-GNN is defined by the number of filters $\alpha$ in each layer, the parameter $\epsilon$, the number of hidden units in each $\mathbf{W}_{\rho}^{(l)}$, and the number of layers $n$.
When constructing weighted graphs with Gaussian kernels, the edge weights fall within the interval $[0,1]$.
As a result, large values of $\rho$ could lead to $\bar{\mathbf{A}}_{\rho}=\mathbf{0}$, and the diagonal elements of $\bar{\mathbf{D}}_{\rho}^{-\frac{1}{2}}$ may become $\infty$.
Similarly, when $\alpha$ is very large, the resulting architectures become wide with a significant number of parameters, making it desirable to maintain a reasonable value for $\alpha$.
In our experiments, we evaluate S2-GNN architectures up to $\alpha=6$.

\subsection{Relationship with Graph Convolutional Network and Computational Complexity}

Kipf and Welling \cite{kipf2017semi} proposed one of the most successful yet simple GNNs, called Graph Convolutional Networks (GCNs), using the following propagation rule:
\begin{equation}
    \bar{\mathbf{H}}^{(l+1)}=\sigma(\tilde{\mathbf{D}}^{-\frac{1}{2}}\tilde{\mathbf{A}}\tilde{\mathbf{D}}^{-\frac{1}{2}}\bar{\mathbf{H}}^{(l)}\mathbf{W}^{(l)}).
    \label{eqn:propagation_rule}
\end{equation}
Here $\tilde{\mathbf{A}}=\mathbf{A}+\mathbf{I}$, $\tilde{\mathbf{D}}$ represents the degree matrix of $\tilde{\mathbf{A}}$, $\bar{\mathbf{H}}^{(l)} \in \mathbb{R}^{N \times H_l}$ denotes the output of the previous layer $l$, where $\bar{\mathbf{H}}^{(1)}=\mathbf{X} \in \mathbb{R}^{N \times M}$ is matrix of input features, and $\mathbf{W}^{(l)} \in \mathbb{R}^{H_l \times H_{l+1}}$ is the matrix of trainable weights in layer $l$.
%, and $\sigma(\cdot)$ is an activation function.
The motivation behind the propagation rule in (\ref{eqn:propagation_rule}) is based on the first-order approximation of localized spectral filters on graphs \cite{defferrard2016convolutional}.
We can observe that GCN is an approximation of the graph convolutional filter in Definition \ref{dfn:graph_filter}, where $q_1 = 1$, $q_i = 0;~\forall~i \neq 1$, the input signal is an $H_l$-dimensional graph signal, and an MLP is added at the end of the filtering process.
Notice that our S2-GNN layer is a generalization of the GCN layer when only the branch $\rho=1$ is used in \eqref{eqn:propagation_rule_sobolev} without the fusion layer, and the shift operator is the normalized adjacency matrix.

The computational complexity of a GCN with $n$ layers is $\mathcal{O}(n \vert \mathcal{E} \vert)$.
The GCN model is widely used in practice due to its simplicity, good performance, and low computational complexity.
For S2-GNN, the computational complexity of each branch remains $\mathcal{O}(\vert \mathcal{E} \vert)$ since our sparse Sobolev term maintains the same sparsity level in the graph representation for any value of $\rho$.
When considering $\alpha$ branches, the complexity becomes $\mathcal{O}(\alpha \vert \mathcal{E} \vert + 2\alpha)$ per layer, taking into account the linear combination layer and the branch for $\rho=0$.
With $n$ layers, the computational complexity of our architecture is $\mathcal{O}\left(n(\alpha \vert \mathcal{E} \vert + 2\alpha)\right) = \mathcal{O}(n \alpha \vert \mathcal{E} \vert + 2n\alpha)$.
Therefore, S2-GNN achieves a computational complexity similar to GCN while delivering better performance on several benchmark datasets.
It is worth noting that the exact complexity of both methods, S2-GNN and GCN, also depends on factors such as the feature dimension, the hidden units, and the number of nodes in the graph, which have been omitted for simplicity and clarity.

\subsection{Theoretical Stability Analysis}
\label{sec:stability_analysis}

Here, we aim to study the effect of graph perturbations on the prediction performance of the proposed network.
Similarly, we analyze the stability \cite{gama2020stability} of S2-GNN against the existence of erroneous edges. 
Particularly, we consider the deviation of the adjacency matrix $\mathbf{A}$ with the error (or perturbation) matrix $\mathbf{E}$ as follows:
%\begin{equation}
\begin{gather}
    \label{pert_model}
    \hat{\mathbf{A}} = \mathbf{A} + \mathbf{E};~\mathbf{E}\in\mathcal{P},\\
    \nonumber
    \resizebox{\columnwidth}{!}{$\mathcal{P} =\left\{\mathbf{E}\in\mathbb{R}^{N\times N}~\vert~\mathbf{E}=\mathbf{E}^\top,\vert e_{i,j}\vert \le a_{i,j};~\forall~ i,j,e_{i,i}=0;~\forall i \right\}.$}
\end{gather}
%\end{equation}
%where $\mathcal{P}=\left\{\mathbf{E}\in\mathbb{R}^{N\times N}~\vert~\mathbf{E}=\mathbf{E}^\top,\vert e_{i,j}\vert \le a_{i,j};~\forall~ i,j,e_{i,i}=0;~\forall i \right\}$.
Note that the symmetry of the error matrix $\mathbf{E}$ stems from the underlying graph being undirected. 
The conditions on $\mathcal{P}$ imply that: 1) the perturbations cannot cause negative edge weights, and 2) we do not consider edge perturbations causing self-loops.

For the sake of simplicity and without loss of generality, we consider the true and perturbed output of a simplified \cite{wu2019simplifying} version of the S2-GNN (SS2-GNN), which can be stated as:
\begin{equation}
\begin{split}
&\mathbf{Y}=\sigma\left(\mathbf{L}^{(\rho)}\mathbf{X\mathbf{W}}\right),\\
&\hat{\mathbf{Y}}=\sigma\left(\hat{\mathbf{L}}^{(\rho)}\mathbf{X}\hat{\mathbf{W}}\right),
\end{split}
\label{eqn:perturbations}
\end{equation}
where $\hat{\mathbf{L}}$ is the perturbed Laplacian matrix. The next theorem obtains an upper bound on the distance between the true and perturbed outputs (\ie $d(\mathbf{Y},\hat{\mathbf{Y}})=\|\mathbf{Y}-\hat{\mathbf{Y}}\|$) for the SS2-GNN in the case of having true and perturbed versions of the graph and graph filter coefficients. 
First, we need to mention the definition of being Lipschitz for a general function $\sigma(\cdot)$ in $\mathbb{R}$.
\begin{definition}
\label{Lipschitz}
We call a function $\sigma(\cdot)$ as Lipschitz if there exists a positive constant $\phi$ 
such that:
\begin{equation}
\forall~x_1, x_2 \in \mathbb{R}: \vert\sigma(x_1)-\sigma(x_2)\vert\le \phi\vert x_1-x_2 \vert.
\end{equation}
\end{definition}

\begin{theorem}
\label{dYY}
With the assumptions of the non-linear function $\sigma(\cdot)$ being Lipschitz, the node feature matrix $\mathbf{X}$ is column-wisely normalized as $\|\mathbf{X}\|_F=\sqrt{\upsilon}$, $\|\mathbf{W}-\hat{\mathbf{W}}\|\le\delta_\mathbf{W}$, $\|\mathbf{E}\|\le\xi$, and $\|\mathbf{D}_E\|\le\xi_D$, where $\mathbf{D}_E$ is the degree matrix of $\mathbf{E}$, the difference between the true and perturbed outputs of an SS2-GNN is bounded by:
\begin{equation}
\label{d_yhat}
\begin{split}
&d(\mathbf{Y},\hat{\mathbf{Y}})\\
&=\left\|\sigma\left(\mathbf{L}^{(\rho)}\mathbf{X\mathbf{W}}\right)-\sigma\left(\hat{\mathbf{L}}^{(\rho)}\mathbf{X}\hat{\mathbf{W}}\right)\right\|\\
&\le\sqrt{\upsilon}\left[(\hat{d}^{max}+\epsilon)^\rho+\|\hat{\mathbf{A}}^{(\rho)}\|\right]\delta_{\mathbf{W}} + \mathcal{O}(\xi^2)+\mathcal{O}(\xi^2_D)\\
&+\rho\sqrt{\upsilon}\left[\eta \|\mathbf{E}\|+(d^{max}+\epsilon)^{\rho-1} d^{max}_{e}\right] \|\mathbf{W}\|,
\end{split}
\end{equation}
where $\eta \triangleq \min\left( r_1(\mathbf{A}^{(\rho-1)}),c_1(\mathbf{A}^{(\rho-1)})\right)$, $r_1(\cdot)$ and $c_1(\cdot)$ denote the row and column of the involved matrix with highest norm-2 among the other rows and columns, respectively.\\
% \begin{equation}
% \eta:=min\{r_1(\mathbf{A}^{(k-1)}),c_1(\mathbf{A}^{(k-1)})\};\:\:\:\|\mathbf{W}-\hat{\mathbf{W}}\|\le\delta_\mathbf{W}.
% \end{equation}
Proof: See Appendix \ref{AppC}.
\end{theorem}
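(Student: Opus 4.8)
The plan is to peel off the nonlinearity with the Lipschitz assumption and then isolate the two independent sources of perturbation---the learned weights $\mathbf{W}$ versus $\hat{\mathbf{W}}$, and the shift operator $(\mathbf{L}+\epsilon\mathbf{I})^{(\rho)}$ versus $(\hat{\mathbf{L}}+\epsilon\mathbf{I})^{(\rho)}$---by a single add-and-subtract step. First, using Definition~\ref{Lipschitz} I would bound $d(\mathbf{Y},\hat{\mathbf{Y}})\le\phi\|(\mathbf{L}+\epsilon\mathbf{I})^{(\rho)}\mathbf{X}\mathbf{W}-(\hat{\mathbf{L}}+\epsilon\mathbf{I})^{(\rho)}\mathbf{X}\hat{\mathbf{W}}\|$, taking $\phi=1$ for the activations of interest. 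Inserting $\pm(\hat{\mathbf{L}}+\epsilon\mathbf{I})^{(\rho)}\mathbf{X}\mathbf{W}$ and applying the triangle inequality together with the submultiplicative estimate $\|\mathbf{G}\mathbf{X}\mathbf{B}\|_F\le\|\mathbf{G}\|\,\|\mathbf{X}\|_F\,\|\mathbf{B}\|$ and $\|\mathbf{X}\|_F=\sqrt{\upsilon}$ splits the bound into a weight-perturbation term $\|(\hat{\mathbf{L}}+\epsilon\mathbf{I})^{(\rho)}\|\sqrt{\upsilon}\,\delta_{\mathbf{W}}$ and a filter-perturbation term $\|(\mathbf{L}+\epsilon\mathbf{I})^{(\rho)}-(\hat{\mathbf{L}}+\epsilon\mathbf{I})^{(\rho)}\|\sqrt{\upsilon}\,\|\mathbf{W}\|$.

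For the weight-perturbation term I would decompose the Hadamard power into its diagonal and off-diagonal contributions. Since $(\hat{\mathbf{L}}+\epsilon\mathbf{I})^{(\rho)}$ has diagonal entries $(\hat{d}_i+\epsilon)^{\rho}$ and off-diagonal entries $(-\hat{a}_{ij})^{\rho}$, it equals a diagonal matrix of spectral norm $(\hat{d}^{max}+\epsilon)^{\rho}$ plus, up to sign, $\hat{\mathbf{A}}^{(\rho)}$; the triangle inequality then yields $\|(\hat{\mathbf{L}}+\epsilon\mathbf{I})^{(\rho)}\|\le(\hat{d}^{max}+\epsilon)^{\rho}+\|\hat{\mathbf{A}}^{(\rho)}\|$, which reproduces the first bracket of \eqref{d_yhat}.

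The filter-perturbation term is where the Hadamard structure does the real work. Writing $\hat{\mathbf{L}}+\epsilon\mathbf{I}=(\mathbf{L}+\epsilon\mathbf{I})+\boldsymbol{\Delta}$ with $\boldsymbol{\Delta}=\mathbf{D}_E-\mathbf{E}$ (using $\hat{\mathbf{A}}=\mathbf{A}+\mathbf{E}$ and $\hat{\mathbf{D}}=\mathbf{D}+\mathbf{D}_E$), the fact that the Hadamard product acts entrywise lets me invoke the scalar binomial theorem on each entry to obtain $(\hat{\mathbf{L}}+\epsilon\mathbf{I})^{(\rho)}-(\mathbf{L}+\epsilon\mathbf{I})^{(\rho)}=\sum_{k=1}^{\rho}\binom{\rho}{k}(\mathbf{L}+\epsilon\mathbf{I})^{(\rho-k)}\circ\boldsymbol{\Delta}^{(k)}$. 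The $k=1$ term, $\rho\,(\mathbf{L}+\epsilon\mathbf{I})^{(\rho-1)}\circ\boldsymbol{\Delta}$, carries the leading behavior, while every $k\ge2$ term contains at least two Hadamard factors of $\boldsymbol{\Delta}$ and hence collapses into $\mathcal{O}(\xi^2)+\mathcal{O}(\xi_D^2)$ under $\|\mathbf{E}\|\le\xi$ and $\|\mathbf{D}_E\|\le\xi_D$. Splitting the leading term along $\boldsymbol{\Delta}=\mathbf{D}_E-\mathbf{E}$, the Hadamard product with the diagonal $\mathbf{D}_E$ retains only the diagonal of $(\mathbf{L}+\epsilon\mathbf{I})^{(\rho-1)}$ and has spectral norm at most $(d^{max}+\epsilon)^{\rho-1}d^{max}_e$, whereas the Hadamard product with $\mathbf{E}$ reduces, up to sign, to $\mathbf{A}^{(\rho-1)}\circ\mathbf{E}$, which I would control by the spectral-norm inequality for Hadamard products $\|\mathbf{A}^{(\rho-1)}\circ\mathbf{E}\|\le\min\!\left(r_1(\mathbf{A}^{(\rho-1)}),c_1(\mathbf{A}^{(\rho-1)})\right)\|\mathbf{E}\|=\eta\|\mathbf{E}\|$. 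Collecting the common $\rho\sqrt{\upsilon}\,\|\mathbf{W}\|$ factors then reproduces the last bracket of \eqref{d_yhat}.

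The hard part will be twofold: first, establishing the entrywise Hadamard binomial expansion and, more delicately, bounding the off-diagonal Hadamard product $\mathbf{A}^{(\rho-1)}\circ\mathbf{E}$ by $\eta\|\mathbf{E}\|$ through the row/column-norm inequality for Hadamard products rather than a naive submultiplicative estimate; second, a uniform accounting of the $k\ge2$ remainder so that it genuinely lands in $\mathcal{O}(\xi^2)+\mathcal{O}(\xi_D^2)$ and does not leak first-order mass. Keeping the Frobenius and spectral norms consistent across the two terms will also require some care so that all the stated constants come out exactly.
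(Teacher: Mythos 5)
Your proposal follows essentially the same route as the paper's proof: Lipschitz peeling, the add-and-subtract split into a weight-perturbation and a filter-perturbation term, the diagonal/off-diagonal decomposition giving $\|\hat{\mathbf{L}}^{(\rho)}\|\le(\hat{d}^{max}+\epsilon)^{\rho}+\|\hat{\mathbf{A}}^{(\rho)}\|$, the entrywise binomial expansion of the Hadamard powers (the paper expands the diagonal and off-diagonal parts separately, which is the same computation as your expansion in $\boldsymbol{\Delta}=\mathbf{D}_E-\mathbf{E}$), the row/column-norm Hadamard inequality (Zhan's Lemma 2c) yielding the $\eta\|\mathbf{E}\|$ term, and truncation of the $k\ge 2$ terms into $\mathcal{O}(\xi^2)+\mathcal{O}(\xi_D^2)$. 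The steps you flag as delicate are handled in the paper exactly as you anticipate, so the plan is correct and matches.
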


The obtained bound in \eqref{d_yhat} reveals interesting aspects of the S2-GNN stability against possible perturbations.
Precisely, this bound relies on the Hadamard order ($\rho$), which implies that a high degree of Hadamard multiplications could make the network more susceptible to perturbations. 
Besides, the absolute dependency of this bound on $\|\mathbf{E}\|$, $\xi$, and $\xi_D$, which are respectively representations of the perturbation (noise) power, depicts the effect of perturbation amplitudes on the network stability.
The existence of $\|\mathbf{W}\|$ and $\delta_\mathbf{W}$ emphasizes the choice of the network size and selection of optimization approach (\eg stochastic gradient descent) for reaching the suboptimal solution for $\|\mathbf{W}\|$, which has a direct impact on the distance (\ie $\delta_\mathbf{W}$) from the true values of $\mathbf{W}$.
Also, as can be seen in (\ref{d_yhat}), an unnecessarily high value of $\epsilon$ could have considerably negative effects on the network robustness, especially in the case of high Hadamard order ($\rho$).
Finally, due to the existence of $\eta$ and more importantly, $d^{max}$, in the upper bound (\ref{d_yhat}), the smaller values for $d^{max}$ (\ie sparse graphs) help in stability and robustness of the network.
Therefore, the proposed network can effectively benefit from the sparse underlying structures, which are widely found in real-world applications.
{We empirically validate the stability analysis of S2-GNN in Appendix \ref{app:experiments_stability_analysis}.}

\section{Experimental Evaluation}
\label{sec:experiments_results}

S2-GNN is compared to nine state-of-the-art methods in GNNs.
These methods include Chebyshev graph convolutions (Cheby) \cite{defferrard2016convolutional}, Graph Convolutional Networks (GCN) \cite{kipf2017semi}, Graph Attention Networks (GAT) \cite{velickovic2018graph}, Simple Graph Convolution (SGC) \cite{wu2019simplifying}, ClusterGCN \cite{chiang2019cluster}, Scalable Inception Graph Neural network (SIGN) \cite{frasca2020sign}, SuperGAT \cite{kim2021find}, a graph Transformer \cite{shi2021masked}, and GAT version 2 (GATv2) \cite{brody2022attentive}.

\subsection{Datasets}

We evaluate the performance of S2-GNN on multiple datasets, which include tissue phenotyping in colon cancer histology images \cite{kather2016multi}, text classification of news \cite{lang1995newsweeder}, activity recognition using sensors \cite{anguita2013public}, recognition of spoken letters \cite{fanty1991spoken}, as well as the commonly used node classification benchmarks Cora \cite{mccallum2000automating}, Citeseer \cite{sen2008collective}, Pubmed \cite{namata2012query}, and OGBN-proteins \cite{hu2020open}.
These diverse datasets provide a comprehensive evaluation of S2-GNN's performance across different domains.
%, and the synthetic datasets Pattern and Cluster \cite{dwivedi2020benchmarking}.
%, \textcolor{red}{and the CDNet2014 dataset for MOS \cite{}}.

\begin{table*}[]
\centering
\caption{Statistics of the datasets in the experimental framework of this work.}
\label{tbl:statistics_datasets}
%\resizebox{0.92\textwidth}{!}{
\resizebox{0.9\textwidth}{!}{
\begin{threeparttable}
\begin{tabular}{r|ccccccccc}
\toprule
         & \textbf{Cancer-B} & \textbf{Cancer-M} & \textbf{20News} & \textbf{HAR} & \textbf{Isolet} & \textbf{Cora} & \textbf{Citeseer} & \textbf{Pubmed} &
         \textbf{OGBN-proteins}
         \\
%         & \textbf{Cancer-B} & \textbf{Cancer-M} & \textbf{20News} & \textbf{HAR} & \textbf{Isolet} & \textbf{Cora} & \textbf{Citeseer} & \textbf{Pubmed} & \textbf{Pattern} & \textbf{Cluster} \\
\midrule
%$H(G)$ & $0.11$ & $0.06$ & $0.16$ & $0.25$ & $0.22$ & $0.24$ & $0.83$ & $0.72$ & $0.79$ & \\
Nodes & $1,250$ & $5,000$ & $3,000$ & $3,000$ & $3,000$ & $2,485$ & $2,120$ & $19,717$ & $30,000$ \\
Edges$^*$ & $25,734$ & $101,439$ & $62,399$ & $87,032$ & $43,330$ & $205,928$ & $183,572$ & $1,723,360$ & $13,251,798$ \\
Edges$^\dagger$ & $19,163$ & $81,462$ & $47,496$ & $61,400$ & $31,130$ & NA & NA & NA & NA\\
Features & $38$ &  $38$ & $2,000$ & $561$ & $617$ & $1,433$ & $3,703$ & $500$ & $8$ \\
Classes &  $2$ &  $8$ & $10$ & $6$ & $26$ & $7$ & $6$ & $3$ & $112$\\
Undirected & \checkmark & \checkmark & \checkmark & \checkmark & \checkmark & \checkmark & \checkmark & \checkmark & \checkmark\\
\bottomrule
\end{tabular}
\begin{tablenotes}\footnotesize
\item \scriptsize $^*$ This number of edges corresponds to the $k$-NN method for the constructed graphs, and the outputs of the GDC method for the citation networks.
\item \scriptsize $^\dagger$ This number of edges corresponds to the method in \cite{kalofolias2019large} for the constructed graphs.
\end{tablenotes}
\end{threeparttable}
}
\end{table*}

\noindent \textbf{The colorectal cancer histology} dataset comes from ten anonymized tissue slides from the University Medical Center Mannheim \cite{kather2016multi}.
The dataset consists of eight classes: tumor epithelium, simple stroma, complex stroma, immune cells, debris, normal mucosal glands, adipose tissue, and background (no tissue).
This work does not explore the feature representation, and only uses Local Binary Patterns (LBP) to represent each image.
We explore two partitions of this dataset: 1) Cancer Binary (Cancer-B) considers only tumor epithelium and simple stroma, and 2) Cancer Multi-class (Cancer-M) tests all classes.

\noindent \textbf{The 20 Newsgroups} dataset (20News) contains approximately $20,000$ news articles categorized into $20$ classes \cite{lang1995newsweeder}.
In this work, we use a subset of ten categories: computer graphics, windows operative system, IBM hardware, MAC hardware, autos, motorcycles, cryptography, electronics, medicine, and space.
Each news document is represented using the Term Frequency - Inverse Document Frequency (TFIDF) as described in \cite{gadde2014active}.

\noindent \textbf{The Human Activity Recognition} dataset (HAR) comprises recordings of $30$ individuals engaged in various activities of daily living \cite{anguita2013public}.
The participants wore waist-mounted smartphones with embedded sensors, and the dataset consists of six activity classes: standing, sitting, laying down, walking, walking downstairs, and walking upstairs.
The recordings are represented as in \cite{anguita2013public}.

\noindent \textbf{The Isolated Letter recognition} dataset (Isolet) includes sound recordings from the $26$ letters of the English alphabet spoken by $30$ speakers.
Isolet contains $6,238$ samples represented as $617$-dimensional feature vectors.
The feature representation method is explained in \cite{fanty1991spoken}.

\noindent \textbf{The Cora, Citeseer, and Pubmed} datasets are well-known citation networks.
The datasets consist of sparse bag-of-words feature vectors for each document (node) and a list of citation links between documents.
In this work, we treat the citation links as undirected edges as in \cite{kipf2017semi}.
To fulfill the requirement of weighted graphs, as mentioned in Section \ref{sec:preliminaries}, we preprocess the \textit{unweighted} citation networks using the Graph Diffusion Convolution (GDC) method \cite{klicpera2019diffusion}.
%Cora, Citeseer, and Pubmed have $2,708$, $3,327$, and $19,717$ nodes respectively.

\noindent \textbf{The Open Graph Benchmark Proteins} dataset (OGBN-Proteins) is a comprehensive graph-based dataset designed for protein-protein interaction networks \cite{hu2020open}. 
The dataset features more than $132,000$ nodes (proteins) and $39$ million edges, reflecting a diverse range of biological relationships. 
Each node in the graph is associated with $8$-dimensional feature vectors, capturing essential biological characteristics of the proteins. 
The primary task of the dataset is to predict protein functions, with each protein potentially linked to one or more of the $112$ function categories.
Due to computational constraints, the scope of the data used in the experiments is reduced to approximately $25\%$ of the full dataset, which equates to around $30,000$ nodes.
This subset selection ensures manageable computational demands while still providing a representative sample for analysis.

\subsection{Experiments}
\label{sec:experiments}

For the Cancer-B, Cancer-M, 20News, HAR, and Isolet datasets, we generate graphs using two methods: $k$-Nearest Neighbors ($k$-NN) and the algorithm described in \cite{kalofolias2019large}.
In our results, we refer to the graphs generated by the latter method as \textit{learned graphs}.
Regarding Cora, Citeseer, and Pubmed datasets, we consider the largest connected component of the graph following common practices \cite{klicpera2019diffusion}. OGBN-proteins is a dataset that comprises a single weighted graph, where edges represent the confidence level of interactions between proteins, eliminating the necessity for preprocessing steps like GDC. This simplifies the data preparation phase and allows for a more direct application of the model.
Table \ref{tbl:statistics_datasets} presents a comprehensive summary of the statistics for all the graphs used in the study.
To optimize the hyperparameters of citation networks, we employ a random search procedure that maximizes the average accuracy on the validation sets. 
For OGBN-proteins, we empirically obtain the hyperparameters, focusing on the critical importance of learning rate and hidden channels over other hyperparameters.
We split the datasets into training, validation, and testing sets.
%We split the datasets Cancer-B, Cancer-M, 20News, HAR, Isolet, Cora, Citeseer, and Pubmed into train, validation, and test sets.
Initially, we divide the data into a development set and a test set, ensuring that the test set is not used during the hyperparameter optimization process.
Following the experimental framework of \cite{klicpera2019diffusion}, we optimize the hyperparameters for each dataset-preprocessing combination separately using a random search over five data splits, with multiple repetitions for enhanced robustness.
%with $100$ repetitions.
%We follow the same experimental framework as in \cite{klicpera2019diffusion}, \ie we optimize the hyperparameters for all dataset-preprocessing combinations separately by random search over $100$ data splits.
%During hyperparameter tuning, we validate each model with five different seeds.
For the Cora, Citeseer, and Pubmed datasets, the development set consists of $1,500$ nodes, while the remaining nodes are used for testing.
Similarly, the train set includes $20$ nodes from each class, and the remaining nodes are utilized for validation.
As for the other datasets, we employ a 10/45/45 split, meaning that 10\% of the nodes are assigned for training, 45\% for validation, and 45\% for testing.
Finally, we report average accuracies on the test set accompanied by 95$\%$ confidence intervals calculated via bootstrapping with $1,000$ samples, with each model being tested using $50$ different seeds to ensure robustness and reliability of the results.

\begin{table*}[]
\centering
\caption{Accuracy (in \%) for the state-of-the-art methods and our S2-GNN algorithm in five datasets, inferring the graphs with $k$-NN or the method for learning smooth graphs in \cite{kalofolias2019large}.}
\label{tbl:results_constructed_graphs}
\resizebox{\textwidth}{!}{
\begin{threeparttable}
%\begin{tabular}{r|ccccc|ccccc}
\begin{tabular}{r|ccccc|ccccc}
\toprule
 & \multicolumn{5}{c|}{\textbf{$k$-NN Graphs}} & \multicolumn{5}{c}{\textbf{Learned Graphs}} \\
\multirow{-2}{*}{\textbf{Model}} & \textbf{Cancer-B} & \textbf{Cancer-M} & \textbf{20News} & \textbf{HAR} & \textbf{Isolet} & \textbf{Cancer-B} & \textbf{Cancer-M} & \textbf{20News} & \textbf{HAR} & \textbf{Isolet} \\
\midrule
Cheby \cite{defferrard2016convolutional} & $82.55_{\pm 4.95}$ & $59.52_{\pm 1.42}$ & $69.53_{\pm 1.47}$ & $85.64_{\pm 2.92}$ & $71.52_{\pm 1.42}$ & $87.64_{\pm 3.69}$ & $57.50_{\pm 2.12}$ & \color{blue} $\underline{\textit{72.49}}_{\pm 0.82}$ & $87.07_{\pm 3.57}$ & $59.14_{\pm 2.50}$ \\
GCN \cite{kipf2017semi} & $71.31_{\pm 5.72}$ & $50.87_{\pm 2.39}$ & $63.08_{\pm 2.36}$ & $64.28_{\pm 4.90}$ & $63.61_{\pm 1.76}$ & $80.50_{\pm 4.77}$ & $52.45_{\pm 3.00}$ & $69.83_{\pm 1.25}$ & $82.85_{\pm 3.00}$ & $71.27_{\pm 2.01}$ \\
GAT \cite{velickovic2018graph} & $68.67_{\pm 5.24}$ & $48.87_{\pm 1.94}$ & $48.90_{\pm 2.17}$ & $70.13_{\pm 3.65}$ & $58.93_{\pm 2.03}$ & $77.69_{\pm 4.52}$ & $51.23_{\pm 2.04}$ & $54.30_{\pm 2.13}$ & $75.45_{\pm 3.97}$ & $58.78_{\pm 1.75}$ \\
SGC \cite{wu2019simplifying} & $72.00_{\pm 4.56}$ & $42.22_{\pm 1.88}$ & $54.24_{\pm 2.40}$ & $39.39_{\pm 2.72}$ & $39.90_{\pm 2.05}$ & $70.82_{\pm 4.87}$ & $43.97_{\pm 1.97}$ & $72.13_{\pm 1.31}$ & $38.22_{\pm 2.71}$ & $43.21_{\pm 1.85}$ \\
ClusterGCN \cite{chiang2019cluster} & $83.03_{\pm 4.43}$ & $56.85_{\pm 1.25}$ & $67.95_{\pm 1.55}$ & $72.67_{\pm 4.70}$ & $59.05_{\pm 3.20}$ & $85.63_{\pm 4.26}$ & $58.58_{\pm 1.76}$ & $62.12_{\pm 1.59}$ & $65.98_{\pm 4.45}$ & $69.54_{\pm 2.37}$ \\
SIGN \cite{frasca2020sign} & \color{blue} $\underline{\textit{89.67}}_{\pm 0.39}$ & \color{blue} $\underline{\textit{62.45}}_{\pm 0.17}$ & \color{red} $\textbf{72.34}_{\pm 0.25}$ & \color{blue} $\underline{\textit{91.58}}_{\pm 0.24}$ & \color{blue} $\underline{\textit{84.11}}_{\pm 0.24}$ & \color{blue} $\underline{\textit{90.30}}_{\pm 0.38}$ & \color{blue} $\underline{\textit{64.99}}_{\pm 0.20}$ & \color{red} $\textbf{74.10}_{\pm 0.30}$ & \color{blue} $\underline{\textit{92.88}}_{\pm 0.23}$ & \color{blue} $\underline{\textit{84.17}}_{\pm 0.27}$ \\
SuperGAT \cite{kim2021find} & $69.54_{\pm 5.09}$ & $40.04_{\pm 1.84}$ & $54.40_{\pm 1.95}$ & $72.66_{\pm 3.76}$ & $60.57_{\pm 2.17}$ & $78.15_{\pm 4.42}$ & $52.73_{\pm 2.07}$ & $59.53_{\pm 1.96}$ & $74.93_{\pm 4.00}$ & $61.49_{\pm 1.68}$ \\
Transformer \cite{shi2021masked} & $74.23_{\pm 5.65}$ & $54.20_{\pm 1.70}$ & $59.06_{\pm 2.75}$ & $79.66_{\pm 3.34}$ & $65.68_{\pm 2.42}$ & $79.23_{\pm 4.83}$ & $52.07_{\pm 2.21}$ & $56.19_{\pm 2.83}$ & $78.30_{\pm 2.81}$ & $65.36_{\pm 2.18}$ \\
GATv2 \cite{brody2022attentive} & $62.89_{\pm 5.21}$ & $47.15_{\pm 2.07}$ & $46.56_{\pm 2.04}$ & $73.28_{\pm 4.02}$ & $64.59_{\pm 1.40}$ & $79.28_{\pm 4.01}$ & $45.80_{\pm 1.78}$ & $51.17_{\pm 2.14}$ & $75.09_{\pm 4.15}$ & $66.46_{\pm 1.87}$ \\
\midrule
S2-GNN (ours) & \color{red} $\textbf{93.39}_{\pm 0.41}$ & \color{red} $\textbf{68.48}_{\pm 0.28}$ & \color{blue} $\underline{\textit{72.09}}_{\pm 0.35}$ & \color{red} $\textbf{93.92}_{\pm 0.22}$ & \color{red} $\textbf{86.57}_{\pm 0.27}$ & \color{red} $\textbf{93.50}_{\pm 0.39}$ & \color{red} $\textbf{68.88}_{\pm 0.25}$ & $72.19_{\pm 0.41}$ & \color{red} $\textbf{94.50}_{\pm 0.19}$ & \color{red} $\textbf{86.61}_{\pm 0.29}$ \\
\bottomrule
\end{tabular}
\begin{tablenotes}\footnotesize
\item \footnotesize The best and second-best performing methods on each dataset are shown in {\color{red}\textbf{red}} and {\color{blue}{\underline{\textit{blue}}}}, respectively.
\end{tablenotes}
\end{threeparttable}
}
\end{table*}

\begin{table}
\centering
\caption{Accuracy (in \%) comparison in the citation networks using the graph diffusion convolution and OGBN-proteins.}
\label{tbl:results_citation_networks}
%\resizebox{\columnwidth}{!}{
\resizebox{\columnwidth}{!}{
\begin{threeparttable}
%\begin{tabular}{r|ccccc|ccccc}
\begin{tabular}{r|cccc}
\toprule
\textbf{Model} & \textbf{Cora} & \textbf{Citeseer} & \textbf{Pubmed} & \textbf{OGBN-proteins} \\
\midrule
Cheby \cite{defferrard2016convolutional} & \color{blue} $\underline{\textit{80.79}}_{\pm 1.43}$ & \color{red} $\textbf{69.41}_{\pm 1.09}$ & \color{red} $\textbf{78.25}_{\pm 1.57}$ & \color{blue}$\underline {\textit{63.55}_{\pm 2.89}}$ \\
GCN \cite{kipf2017semi} & $70.22_{\pm 2.91}$ & $58.85_{\pm 2.88}$ & $56.64_{\pm 5.81}$ & $58.86_{\pm 7.63}$ \\
GAT \cite{velickovic2018graph} & $60.29_{\pm 4.25}$ & $56.18_{\pm 2.76}$ & $56.57_{\pm 5.11}$ & $41.58_{\pm 4.42}$ \\
SGC \cite{wu2019simplifying} & $64.23_{\pm 3.66}$ & $56.74_{\pm 2.82}$ & $70.35_{\pm 2.89}$ & $56.87_{\pm 6.42}$ \\
ClusterGCN \cite{chiang2019cluster} & $75.95_{\pm 2.53}$ & $63.93_{\pm 1.96}$ & $59.24_{\pm 6.31}$ & $50.01_{\pm 0.02}$ \\
SuperGAT \cite{kim2021find} & $65.71_{\pm 3.69}$ & $55.80_{\pm 3.27}$ & \textbf{OOM} & $58.59_{\pm 0.79}$ \\
Transformer \cite{shi2021masked} & $66.86_{\pm 3.47}$ & $62.85_{\pm 2.19}$ & $61.83_{\pm 5.46}$ & $39.75_{\pm 4.76}$ \\
GATv2 \cite{brody2022attentive} & $62.16_{\pm 3.29}$ & $53.24_{\pm 2.25}$ & $56.39_{\pm 5.53}$ & $50.06_{\pm 0.13}$ \\
\midrule
S2-GNN (ours) & \color{red} $\textbf{82.47}_{\pm 0.42}$ & \color{blue} $\underline{\textit{68.95}}_{\pm 0.55}$ & \color{blue} $\underline{\textit{77.64}}_{\pm 1.05}$ & \color{red} $\textbf{70.26}_{\pm 4.08}$ \\
\bottomrule
\end{tabular}
\begin{tablenotes}\footnotesize
\item \scriptsize \textbf{OOM}: Out Of Memory in a GPU Nvidia A100 40GB.
\end{tablenotes}
\end{threeparttable}
}
\end{table}

\begin{table}
\centering
\caption{Sparsity (\%) for the non-sparse Sobolev term $\left( \mathbf{L}+\epsilon \mathbf{I} \right)^{\rho}$ for several values of $\rho$.}
\label{tbl:sparsity_percentage}
%\resizebox{\columnwidth}{!}{
\resizebox{0.9\columnwidth}{!}{
\begin{threeparttable}
\begin{tabular}{r|ccccc}
\toprule
\textbf{Power} & \textbf{Cancer-B} & \textbf{Cancer-M} & \textbf{20News} & \textbf{HAR} & \textbf{Isolet} \\
\midrule
$\rho=1$ & $96.63$ & $99.17$ & $98.58$ & $98.03$ & $99.01$ \\
$\rho=2$ & $80.33$ & $94.05$ & $65.65$ & $82.29$ & $92.73$ \\
$\rho=3$ & $53.82$ & $82.48$ & $0.615$ & $59.74$ & $75.92$ \\
$\rho=4$ & $29.32$ & $67.67$ & $0$ & $46.05$ & $46.19$ \\
$\rho=5$ & $12.46$ & $52.55$ & $0$ & $35.07$ & $20.41$ \\
\bottomrule
\end{tabular}
\begin{tablenotes}\footnotesize
\item \scriptsize Zero sparsity percentage means that the matrix is completely dense.
\end{tablenotes}
\end{threeparttable}
}
\end{table}

% \begin{figure}[t]
%     \centering
%     %\includegraphics[width=0.9\columnwidth]{}
%     \includegraphics[width=0.9\columnwidth]{figures/training_time.pdf}
%     \caption{Average training time per epoch for several GNNs in 20News with variations in the number of nodes $N$.}
%     \label{fig:experiment_time}
% \end{figure}

\subsection{Implementation Details}
\label{sec:implementation_details}

S2-GNN and the state-of-the-art methods are implemented using PyTorch Geometric (PyG) \cite{fey2019fast}.
The learned graph methods are implemented using the GSP toolbox \cite{perraudin2014gspbox}.
%and UNLocBox \cite{perraudin2014unlocbox}.
The value of $k$ in the $k$-NN method is set to $30$. %, and then the value $\theta_k$ (Eqn. ()) in \eqref{eqn:smooth_graphs_one_parameter} is computed as in \cite{kalofolias2019large}.
%the middle value of the interval in \eqref{eqn:intervals}.
Rectified Linear Unit (ReLU) and log-softmax are used as activation functions in all methods.
The hyperparameters such as the number of GNN layers, hidden units, learning rate, weight decay, and dropout rate are optimized for each method using the validation set in each experiment.
For Cheby we optimize the Chebyshev filter size $K$.
For SIGN we optimize the number of powers.
For attention-based techniques like GAT and Transformers, we optimize the number of heads.
For S2-GNN we optimize $\alpha$, $\epsilon$, and whether the fusion mechanism is a linear combination layer or an MLP fusion as introduced in \eqref{eqn:linear_combination_layer} or \eqref{eqn:MLP_fusion_layer}.
The hyperparameter search space for each method is defined as follows: 1) learning rate $lr \in [0.005, 0.02]$, 2) weight decay $wd \in [0.0001, 0.001]$, 3) hidden units of each graph convolutional layer $hu \in \{16, 32, 64\}$, 4) dropout $d \in [0.3, 0.7]$, 5) the number of layers $L \in \{2,3,4,5\}$, 6) Chebyshev filter size $\text{K} \in \{1,2,3\}$, 7) heads of attention mechanism $hd \in \{1,2,\dots,6\}$, 8) powers for SIGN $pw \in \{1,2,3\}$, 9) $\alpha \in \{1,2,\dots,6\}$, 10) $\epsilon \in [0.5, 2]$, 11) fusion layer $fl \in \{\text{Linear}, \text{MLP}\}$.
For the GDC method applied to Cora, Citeseer, and Pubmed datasets, personalized PageRank is used with a teleport probability of $0.05$, and the top $128$ entries with the highest mass per column are selected to sparsify the output of GDC.
%parameters as: 1) symmetric normalization for the transition matrix, 2) the normalization of the transition matrix on the transformed GDC (output) graph...
%\textcolor{red}{For Pattern and Cluster we do not perform hyperparameter optimization and we add some additional features following the same experimental setting as in \cite{dwivedi2020benchmarking}: 1) S2-GNN has four layers, 2) we restrict $\alpha=2$ to keep a low parameter budget, 3) we add residual connections and batch normalization to each graph convolutional layer, and 4) we add a MLP at the end of our architecture.}
The code and constructed graphs will be publicly available upon publication under the MIT license\footnote{\url{https://github.com/jhonygiraldo/S2-GNN}}.

\subsection{Results and Discussion}

Table \ref{tbl:results_constructed_graphs} provides a summary of the comparison between S2-GNN and state-of-the-art methods in terms of Accuracy for Cancer-B, Cancer-M, 20News, HAR, and Isolet.
In most cases, S2-GNN demonstrates the best performance, except for the 20News dataset.
Both Cheby and SIGN methods, which rely on higher-order graph convolutions, also achieve competitive results.
However, the disadvantage of these non-sparse higher-order methods is that the graph matrix representations (either the adjacency or Laplacian matrices) become denser with higher powers of the shift operator.
S2-GNN achieves better performances while maintaining sparsity in the graph matrix representations and providing a good confidence interval in the results.
Furthermore, Table \ref{tbl:results_constructed_graphs} highlights that learned graphs on the right side generally outperform the $k$-NN graphs on the left for most methods (Appendix \ref{app:homophily} presents a study in the homophily level of $k$-NN and learned graphs). 
The performance of the learned graphs is consistently better, with only a few specific cases where the $k$-NN graphs perform better.
However, these instances are rare exceptions.

Table \ref{tbl:results_citation_networks} provides the results of S2-GNN and state-of-the-art methods on the citation network datasets, after being pre-processed with the GDC method, and the OGBN-proteins dataset.
%all on the node classification tasks.
%The reported average accuracies are obtained from $50$ different seeds, and 95$\%$ confidence intervals are calculated using bootstrapping.
S2-GNN achieves the best performance in Cora and OGBN-proteins datasets and very competitive performance in Citeseer and Pubmed datasets.
Tables \ref{tbl:results_constructed_graphs} and \ref{tbl:results_citation_networks} collectively demonstrate that S2-GNN exhibits strong performance in cases where we construct graphs for the downstream task (using either $k$-NN or smooth graphs) as well as when the graph is naturally provided by the task, as seen in the citation networks.
{We provide additional experiments on graph classification tasks in the Appendix \ref{app:graph_classification}.}

%Although Cheby demonstrates slightly better results than S2-GNN in Citeseer and Pubmed, we argue that the difference in performance is negligible considering the huge disparity in training time and computational complexity, as shown in Fig. \ref{fig:experiment_time}.

Table \ref{tbl:sparsity_percentage} shows the sparsity percentage of the non-sparse Sobolev term $\left( \mathbf{L}+\epsilon \mathbf{I} \right)^{\rho}$ for different values of $\rho$.
We observe that the non-sparse filter loses sparsity quickly for higher values of $\rho$, resulting in increased memory and computational complexity since more edges are considered for message passing.

{
\noindent \textbf{Memory consumption and inference time}.
We thoroughly analyze the memory consumption and inference time of S2-GNN with synthetic graphs.
To this end, we first build underlying graphs using Erdős–Rényi (ER) models with varying number of nodes $N\in\{500, 1000, 5000, 10000\}$ and edge probabilities $p_{ER}\in\{0.03,0.04,0.05,0.06\}$.
Therefore, by feeding an identical initial feature matrix $\mathbf{X}\in\mathbb{R}^{N\times F}$, where $F=16$, we plotted the averaged memory usage (in terms of GB) and inference times (in seconds) over $30$ random seeds in Figure \ref{fig:time_memory}.
We observe that S2-GNN is among the most efficient models in terms of memory and time complexity when the number of nodes increases and the sparsity of the graphs decreases.
}

% Figure \ref{fig:experiment_time} displays a comparison of the training time per epoch for various state-of-the-art GNNs on the 20News dataset.
% To ensure a fair comparison, all methods are set to have similar complexity. Specifically, S2-GNN is trained with $\alpha = 4$, Cheby with $K=4$, and all attention-based methods with four heads.
% Additionally, the number of layers is fixed at two for all methods, and the number of hidden units is adjusted to achieve approximately $160,000$ learnable parameters for each method.
% The experiments are conducted on an Nvidia P100 16GB GPU, and the reported results are averaged over $10$ repetitions.
% Notably, S2-GNN demonstrates the best trade-off between training time and performance.

\begin{figure*}[t]
    \centering
    \includegraphics[width=0.9\textwidth]{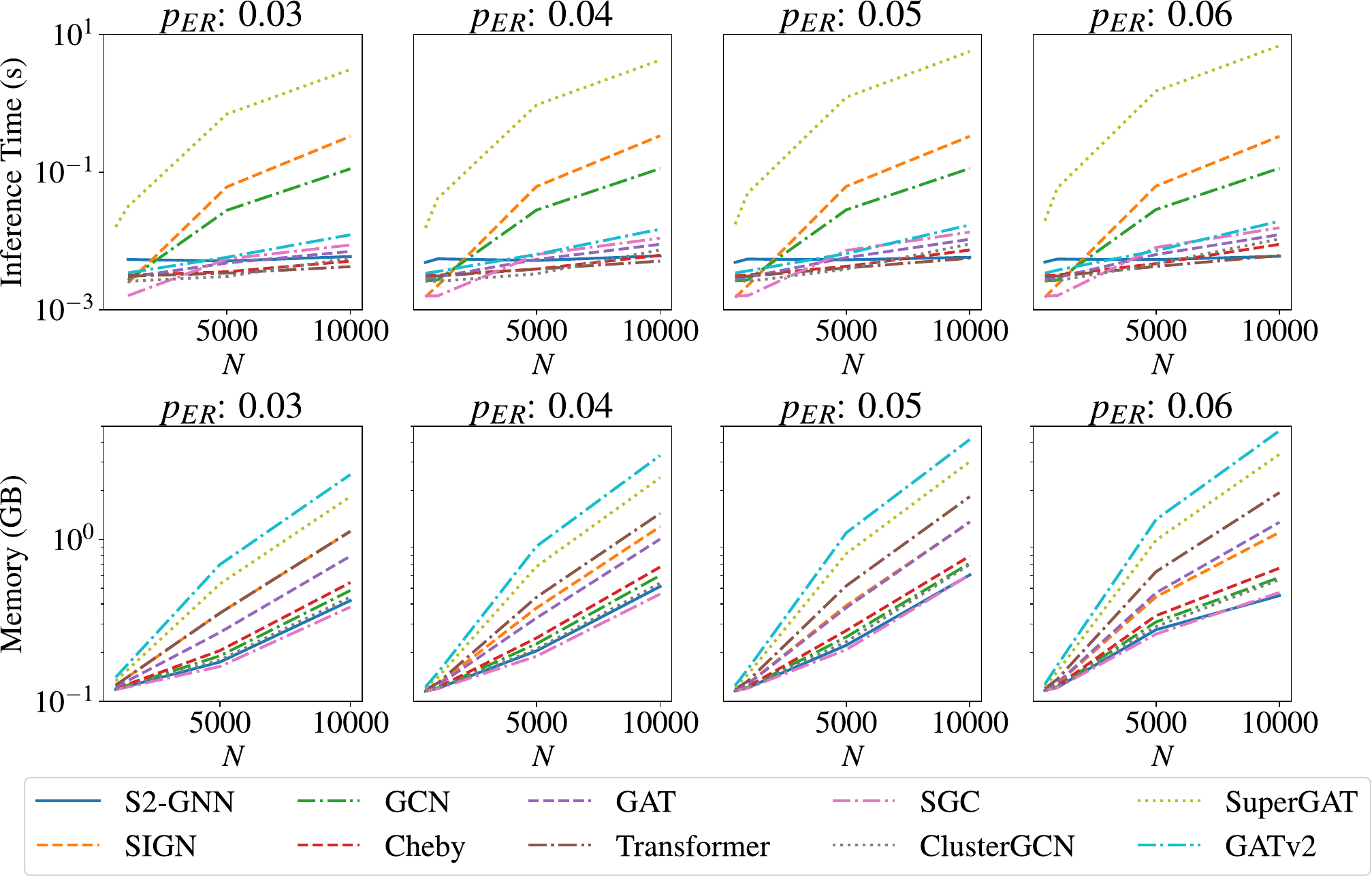}
    \caption{Comparison of inference times (in seconds) and memory consumption (in terms of GB) on the underlying ER graphs with varying number of nodes $N\in\{500,1000,5000,10000\}$ and edge probabilities $p_{ER}\in\{0.03,0.04,0.05,0.06\}$.}
    \label{fig:time_memory}
\end{figure*}

\subsection{Ablation Study}

We conduct an ablation study to investigate whether the performance improvement in S2-GNN is attributed to the additional parameters introduced by having multiple branches or the Hadamard operation in the graph filter.
To explore this, we create a similar model to S2-GNN but exclude the Hadamard powers in each branch.
Instead, we use the identity operation in the first branch and the term $\tilde{\mathbf{D}}^{-\frac{1}{2}}\tilde{\mathbf{A}}\tilde{\mathbf{D}}^{-\frac{1}{2}}$ in the remaining branches.
By performing hyperparameter optimization as outlined in Section \ref{sec:experiments}, we obtain the results summarized in Table \ref{tbl:ablation}.
The findings suggest that, for nearly all datasets, incorporating the Hadamard powers using either $k$-NN or learned graphs leads to better performance compared to simply adding extra parameters to the GNN architecture.
This implies that the Hadamard operation plays a crucial role in enhancing the model's effectiveness.

{
Another ablation experiment studies the usage of the regular and sparse Sobolev norms in S2-GNN. Table \ref{tbl:ablation_regular_vs_sparse_norm} shows the comparison results, where we observe the sparse norm provides better results on all datasets except on 20News.
}

\begin{table*}[]
\centering
\caption{Ablation study in the selection of Hadamard power for the filtering operation.}
\label{tbl:ablation}
\resizebox{\textwidth}{!}{
\begin{threeparttable}
\begin{tabular}{c|ccccc|ccccc}
\toprule
 & \multicolumn{5}{c|}{\textbf{$k$-NN Graphs}} & \multicolumn{5}{c}{\textbf{Learned Graphs}} \\
\multirow{-2}{*}{\textbf{Hadamard Power}} & \textbf{Cancer-B} & \textbf{Cancer-M} & \textbf{20News} & \textbf{HAR} & \textbf{Isolet} & \textbf{Cancer-B} & \textbf{Cancer-M} & \textbf{20News} & \textbf{HAR} & \textbf{Isolet} \\
\midrule
\ding{55} & $91.20_{\pm 0.50}$ & $66.29_{\pm 0.35}$ & $72.08_{\pm 0.37}$ & $93.08_{\pm 0.34}$ & $85.66_{\pm 0.31}$ & $88.49_{\pm 3.67}$ & $67.97_{\pm 0.24}$ & $\textbf{72.63}_{\pm 0.31}$ & $93.82_{\pm 0.25}$ & $86.44_{\pm 0.32}$ \\
\ding{51} & $\textbf{93.39}_{\pm 0.41}$ &  $\textbf{68.48}_{\pm 0.28}$ &  $\textbf{72.09}_{\pm 0.35}$ & $\textbf{93.92}_{\pm 0.22}$ & $\textbf{86.57}_{\pm 0.27}$ & $\textbf{93.50}_{\pm 0.39}$ & $\textbf{68.88}_{\pm 0.25}$ & $72.19_{\pm 0.41}$ & $\textbf{94.50}_{\pm 0.19}$ & $\textbf{86.61}_{\pm 0.29}$ \\
\bottomrule
\end{tabular}
\begin{tablenotes}\footnotesize
\item \footnotesize The best results on each dataset are shown in \textbf{bold}.
\end{tablenotes}
\end{threeparttable}
}
\end{table*}

\begin{table*}[]
\centering
\caption{{Ablation study regarding regular versus sparse Sobolev norm.}}
\label{tbl:ablation_regular_vs_sparse_norm}
\resizebox{\textwidth}{!}{
\begin{tabular}{c|ccccc|ccccc}
\toprule
 & \multicolumn{5}{c|}{\textbf{$k$-NN Graphs}} & \multicolumn{5}{c}{\textbf{Learned Graphs}} \\
\multirow{-2}{*}{\textbf{Sparse Norm}} & \textbf{Cancer-B} & \textbf{Cancer-M} & \textbf{20News} & \textbf{HAR} & \textbf{Isolet} & \textbf{Cancer-B} & \textbf{Cancer-M} & \textbf{20News} & \textbf{HAR} & \textbf{Isolet} \\
\midrule
\ding{55} & $87.05_{\pm 4.25}$ & $64.75_{\pm 0.34}$ & $\textbf{72.29}_{\pm 0.34}$ & $92.80_{\pm 0.30}$ & $85.81_{\pm 0.35}$ & $92.92_{\pm 0.48}$ & $66.41_{\pm 0.37}$ & $\textbf{73.02}_{\pm 0.41}$ & $93.32_{\pm 0.35}$ & $86.25_{\pm 0.30}$ \\
\ding{51} & $\textbf{93.39}_{\pm 0.41}$ &  $\textbf{68.48}_{\pm 0.28}$ &  $72.09_{\pm 0.35}$ & $\textbf{93.92}_{\pm 0.22}$ & $\textbf{86.57}_{\pm 0.27}$ & $\textbf{93.50}_{\pm 0.39}$ & $\textbf{68.88}_{\pm 0.25}$ & $72.19_{\pm 0.41}$ & $\textbf{94.50}_{\pm 0.19}$ & $\textbf{86.61}_{\pm 0.29}$ \\
\bottomrule
\end{tabular}
}
\end{table*}

% \begin{table}[]
% \centering
% \caption{\color{red} Make four rows.}
% \label{tbl:}
% \resizebox{\columnwidth}{!}{
% %\begin{threeparttable}
% \begin{tabular}{r|ccccc}
% \toprule
%  & \multicolumn{5}{c}{\textbf{$k$-NN Graphs}} \\
% \multirow{-2}{*}{\textbf{Model}} & \textbf{Cancer-B} & \textbf{Cancer-M} & \textbf{20News} & \textbf{HAR} & \textbf{Isolet} \\
% \midrule
% Cascade &  &  &  &  & \\
% S2-GNN (ours) & \color{red} $\textbf{93.39}_{\pm 0.41}$ & \color{red} $\textbf{68.48}_{\pm 0.28}$ & \color{blue} $\underline{\textit{72.09}}_{\pm 0.35}$ & \color{red} $\textbf{93.92}_{\pm 0.22}$ & \color{red} $\textbf{86.57}_{\pm 0.27}$ \\
% \midrule
%  & \multicolumn{5}{c}{\textbf{Learned Graphs}} \\
% Cascade &  &  &  &  & \\
% S2-GNN (ours) & \color{red} $\textbf{93.50}_{\pm 0.39}$ & \color{red} $\textbf{68.88}_{\pm 0.25}$ & $72.19_{\pm 0.41}$ & \color{red} $\textbf{94.50}_{\pm 0.19}$ & \color{red} $\textbf{86.61}_{\pm 0.29}$ \\
% \bottomrule
% \end{tabular}
% % \begin{tablenotes}\footnotesize
% % \item \footnotesize The best and second-best performing methods on each dataset are shown in {\color{red}\textbf{red}} and {\color{blue}{\underline{\textit{blue}}}}, respectively.
% % \end{tablenotes}
% % \end{threeparttable}
% }
% \end{table}

\begin{figure}
    \centering
    \includegraphics[width=\columnwidth]{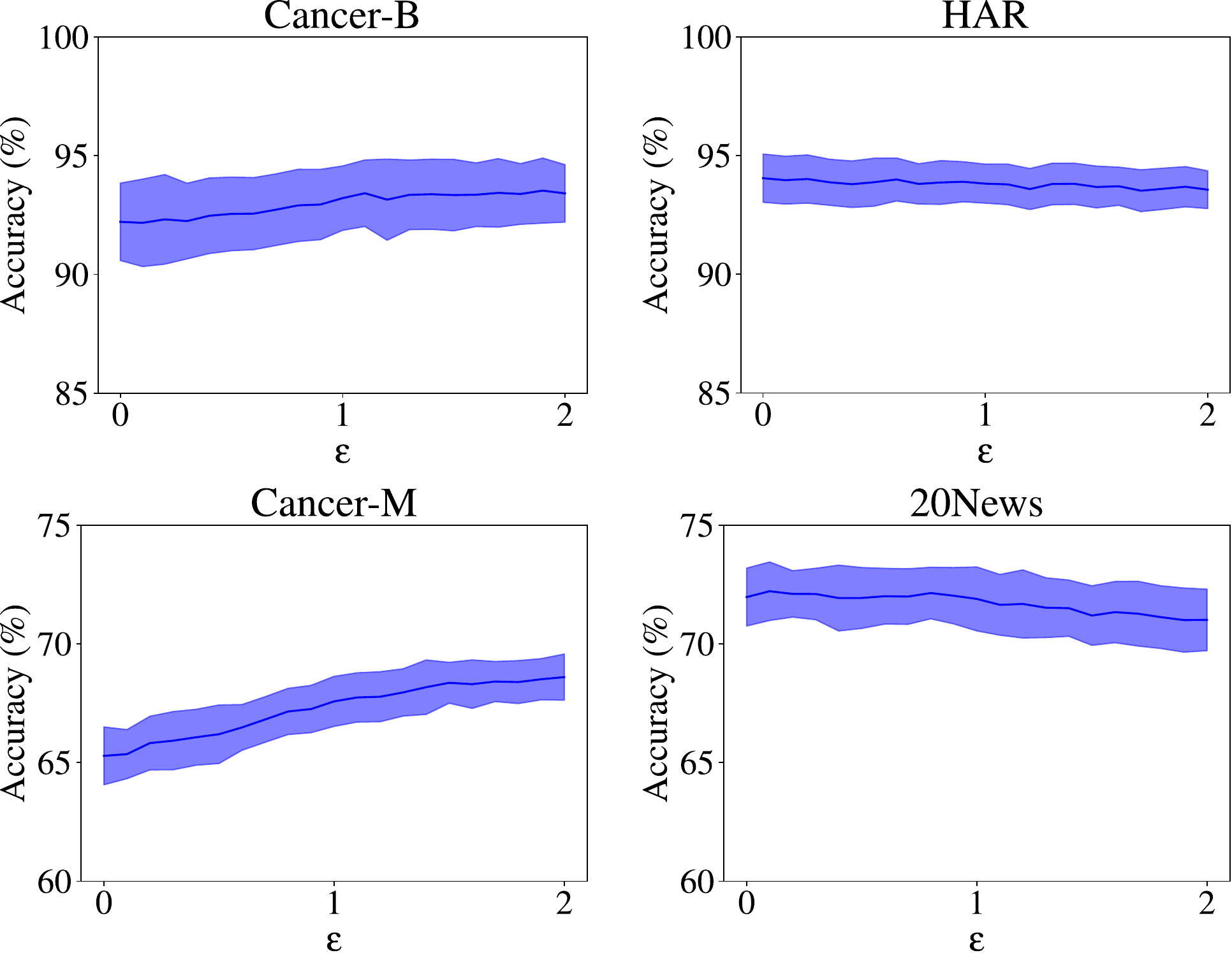}
    \caption{{Mean and standard deviation for the accuracy in different datasets for the sensibility of the hyperparameter $\epsilon$.}}
    \label{fig:sensibility}
\end{figure}

% \begin{figure}
%     \centering
%     \begin{subfigure}[t]{0.49\columnwidth}
%         \centering
%         \includegraphics[width=\textwidth]{}
%         \caption{}
%         \label{fig:}
%     \end{subfigure}%
%     \hfill
%     \begin{subfigure}[t]{0.49\columnwidth}
%         \centering
%         \includegraphics[width=\textwidth]{}
%         \caption{Trans-Test set.}
%         \label{fig:alpha trans}
%     \end{subfigure}
%     \caption{Sensibility analysis of.}
%     \label{fig:sensibility}
% \end{figure}

{
\noindent \textbf{Sensibility analysis of $\epsilon$}.
Figure \ref{fig:sensibility} shows a sensibility analysis of the hyperparameter $\epsilon \in [0,2]$ (with standard deviations) for different datasets using $k$-NN graphs.
We observe that $\epsilon$ has a relatively important impact on the Cancer-M dataset.
For the rest of the datasets, S2-GNN is relatively robust to the choice of $\epsilon$.
We note that the sparsity level of the Cancer-M dataset is the highest among the compared datasets.
Therefore, there is a higher chance for the sparsest graph of Cancer-M to have an ill-condition Laplacian and be improved by the Sobolev coefficient $\epsilon$, as discussed in \eqref{eqn:cond_number_sob_and_laplacian}.
}
%{\color{green} To have an intuition about this observation, we should note that the edge sparsity level of the Cancer-M dataset is 0.008, while for the Cancer-B, 20News and HAR are 0.033, 0.014 and 0.019, respectively. Therefore, there is a higher chance for the sparsest graph (\ie the Cancer-M) to have an ill-condition Laplacian and get improved by Sobolev coefficient $\epsilon$, as discussed in eq. \eqref{eqn:cond_number_sob_and_laplacian}.}

\subsection{Limitations}

% The main limitation of S2-GNN is the requirement of having weighted graphs as input when using the adjacency matrix.
% Similarly, S2-GNN adds three additional hyperparameters ($\alpha$, $\epsilon$, and fusion layer) with respect to GCN.
% As a result, finding a good set of hyperparameters with random search is considerably harder for S2-GNN than for GCN.
One of the main limitations of S2-GNN is its reliance on weighted graphs as input when using the adjacency matrix.
This requirement can pose a limitation in certain scenarios where only unweighted graphs are available or when the weighted graph construction process is challenging or unreliable.
Additionally, compared to traditional GCN, S2-GNN introduces three additional hyperparameters: $\alpha$, $\epsilon$, and the fusion layer.
The inclusion of these hyperparameters increases the complexity of the model and makes the hyperparameter optimization process more challenging compared to GCN.
Finding an optimal set of hyperparameters for S2-GNN through random search becomes more difficult due to the expanded search space.
These limitations emphasize the importance of careful consideration and understanding of the graph structure and hyperparameter selection when applying S2-GNN in practical scenarios.

\section{Conclusion}
\label{sec:conclusions}

In this work, we introduced a new higher-order sparse graph filter for GNNs.
We proposed a novel sparse Sobolev norm inspired by the GSP literature and used Hadamard powers to compute efficient graph filters.
Sections \ref{sec:sobolev_norm}, \ref{sec:sparse_sob_norm}, and \ref{sec:stability_analysis} provide theoretical insights into our filtering operation. 
To capture higher-order dependencies in GNNs, we introduced the S2-GNN model.
This model incorporates a cascade of increasing sparse filtering operations on each layer, along with a fusion layer that selects the most effective operations.
By leveraging these mechanisms, S2-GNN effectively captures and utilizes higher-order information in graph-structured data.
In addition, we explored the inference of smooth graphs as an alternative when the underlying graph topology is unavailable for a given problem.
Our observations indicate that smooth inferred graphs improve the performance of baseline GNN architectures by promoting homophily.
We thoroughly evaluated S2-GNN in several datasets in a semi-supervised regimen.
S2-GNN achieves competitive performance compared to state-of-the-art GNNs across all tasks while maintaining considerably lower complexity than the best-performing methods.

%S2-GNN outperformed several state-of-the-art GNNs in all tasks.

%we inferred smooth graphs and introduced a new GNN architecture with applications in semi-supervised learning, node classification, and a large-scale MOS problem.
%We observed that smooth inferred graphs improve the performance of baseline GNN architectures by promoting homophily.
%Furthermore, our S2-GNN model used the Sobolev norm of GSP to compute a more diverse set of functions for GNNs.
%Then, we extended the concept of Sobolev norms using the Hadamard product between matrices to keep the sparsity level of the graph representations.
%Several theoretical notions of our filtering operation were provided in Sections \ref{sec:sobolev_norm} and \ref{sec:sparse_sob_norm}.
%Our architecture was designed with a cascade of filtering operations on each layer with an attention function to select the best operations.
%S-SobGNN was evaluated on publicly available data including, 1) four datasets for semi-supervised learning, 2) two datasets for node classification, and 3) one dataset for MOS.

This work opens several research directions.
The first direction is the extension of S2-GNN to the setting of ClusterGCN \cite{chiang2019cluster}.
This extension would explore the possibility of removing the explicit need for inter-cluster communication, potentially simplifying the model architecture.
Another promising avenue is the study of shift operators that do not require explicit weighted graphs.
This direction would address the limitation of S2-GNN, which requires weighted graphs as input.
By developing alternative shift operators, we can broaden the applicability of the model and overcome this constraint.
Furthermore, neural architecture search can be studied in the context of S2-GNN to approach other problems such as graph classification and link prediction, while addressing the challenge of hyperparameter search in S2-GNN.

\bibliographystyle{IEEEtran}
\bibliography{bibfile}

\clearpage
\newpage

\appendices
\section{Proof of the Theorem \ref{trm:SSob_norm}}
\label{app:proof_SSob_norm}

\begin{proof}
    1) For the triangle inequality we have that:
    \begin{gather}
        \nonumber
        \Vert \mathbf{x+y} \Vert_{(\rho),\epsilon}^2 = \langle \mathbf{x+y}, \mathbf{y+x} \rangle_{(\rho),\epsilon} = \Vert (\mathbf{L}+\epsilon \mathbf{I})^{(\rho/2)} (\mathbf{x+y}) \Vert^2\\
        \label{eqn:intermediate_triangle_inequality}
        \rightarrow \Vert \mathbf{x+y} \Vert_{(\rho),\epsilon}^2 = \Vert \mathbf{x} \Vert_{(\rho),\epsilon}^2 + \Vert \mathbf{y} \Vert_{(\rho),\epsilon}^2 + 2 \langle \mathbf{x}, \mathbf{y} \rangle_{(\rho),\epsilon}.
    \end{gather}
    Using the Cauchy–Schwarz inequality we obtain:
    \begin{gather}
        \nonumber
        \Vert \mathbf{x+y} \Vert_{(\rho),\epsilon}^2 \leq \Vert \mathbf{x} \Vert_{(\rho),\epsilon}^2 + 2\Vert \mathbf{x} \Vert_{(\rho),\epsilon}\Vert \mathbf{y} \Vert_{(\rho),\epsilon} + \Vert \mathbf{y} \Vert_{(\rho),\epsilon}^2\\
        \label{eqn:triangle_inequality}
        \rightarrow \Vert \mathbf{x+y} \Vert_{(\rho),\epsilon}^2 \leq (\Vert \mathbf{x} \Vert_{(\rho),\epsilon} + \Vert \mathbf{y} \Vert_{(\rho),\epsilon})^2.
    \end{gather}
    
    2) Let $s$ be a scalar, then for the absolute homogeneity property we have:
    \begin{equation}
        \resizebox{1\columnwidth}{!}{$
        \Vert s\mathbf{x} \Vert_{(\rho),\epsilon} = \Vert (\mathbf{L}+\epsilon \mathbf{I})^{(\rho/2)} s\mathbf{x}  \Vert = \vert s \vert \Vert (\mathbf{L}+\epsilon \mathbf{I})^{(\rho/2)} \mathbf{x}  \Vert = \vert s \vert \Vert \mathbf{x} \Vert_{(\rho),\epsilon}.$}
        %\resizebox{.98\hsize}{!}{$\Vert s\mathbf{x} \Vert_{(\rho),\epsilon} = \Vert (\mathbf{L}+\epsilon \mathbf{I})^{(\rho/2)} s\mathbf{x}  \Vert = \vert s \vert \Vert (\mathbf{L}+\epsilon \mathbf{I})^{(\rho/2)} \mathbf{x}  \Vert = \vert s \vert \Vert \mathbf{x} \Vert_{(\rho),\epsilon}$}.
        \label{eqn:absolute_homogeneity}
    \end{equation}
    
    3) For the positive definiteness property, we rely on the Schur product theorem, which states that the Hadamard product between two positive definite matrices is also positive definite \cite{horn2012matrix}.
    %3) For the positive definiteness property, we need the Schur product theorem:
    % \begin{theorem}(Schur product theorem)
    %     \label{thr:schur_theorem}
    %     Let $\mathbf{S},\mathbf{T} \in \mathbb{R}^{n\times n}$ be two positive definite matrices of dimension $n\times n$, then $\mathbf{S} \circ \mathbf{T}$ is also positive definite.\\
    %     Proof: see \cite{horn2012matrix}.
    % \end{theorem}
    When $\epsilon=0$ and $\rho=1$, the sparse Sobolev norm reduces to the graph Laplacian quadratic form \cite{shuman2013emerging}.
    In this case, $\Vert \mathbf{x} \Vert_{(1),0}^2 = \mathbf{x}^\mathsf{T}\mathbf{Lx} = 0 \iff \mathbf{x}= \tau \mathbf{1}$ (with $\tau$ some constant value).
    As a result, $\Vert \mathbf{x} \Vert_{(\rho),\epsilon}$ becomes a semi-norm because it satisfies only the first two conditions of vector norms, \ie the triangle inequality and absolute homogeneity.
    However, for $\epsilon > 0$, we have $\mathbf{L}+ \epsilon \mathbf{I} \succ 0$ according to (\ref{eqn:cond_number_sob_and_laplacian}), and then $(\mathbf{L}+ \epsilon \mathbf{I})^{(\rho)} \succ 0$ by using the Schur product theorem. As a result:
    \begin{equation}
        \Vert \mathbf{x} \Vert_{(\rho),\epsilon}^2 = \mathbf{x}^{\mathsf{T}}(\mathbf{L}+\epsilon\mathbf{I})^{(\rho)}\mathbf{x} = 0 \iff \mathbf{x} = 0; ~\forall~\epsilon > 0.
        \label{eqn:positive_definiteness}
    \end{equation}
    As a consequence, the sparse Sobolev norm satisfies the properties of vector norms for all $\epsilon > 0$ according to (\ref{eqn:triangle_inequality}), (\ref{eqn:absolute_homogeneity}), and (\ref{eqn:positive_definiteness}).
    Similarly, the sparse Sobolev norm definition satisfies the properties of semi-norms for $\epsilon=0$ according to (\ref{eqn:triangle_inequality}) and (\ref{eqn:absolute_homogeneity}).
\end{proof}

\section{Proof of Theorem \ref{trm:spectrum_hadamard_product}}
\label{app:proof_spectrum_hadamard_product}

\begin{proof}
    %\textcolor{red}{Complete proof}.
    We can compute the spectral decomposition of the Kronecker product between matrices using concepts from product graphs.
    The spectrum of the Kronecker product of multiple graphs is obtained by taking the Kronecker products of their respective spectral components.
    For instance, the spectral decomposition of $\mathbf{L}\otimes \mathbf{L}$ can be expressed as:
    \begin{equation}
        \mathbf{L}\otimes \mathbf{L}=(\mathbf{U} \otimes \mathbf{U})(\mathbf{\Lambda} \otimes \mathbf{\Lambda})(\mathbf{U}^{\mathsf{T}} \otimes \mathbf{U}^{\mathsf{T}}).
        \label{eqn:spectrum_kronecker_product}
    \end{equation}
    Similarly, there is a relationship between the Kronecker and the Hadamard products as follows:
    \begin{theorem}(Visick \cite{visick2000quantitative})
        \label{trm:hadamard_kronecker}
        For $\mathbf{S},\mathbf{T} \in \mathbb{R}^{n\times m}$, we have:
        \begin{equation}
            \mathbf{S} \circ \mathbf{T} = \mathbf{P}_n^{\mathsf{T}}(\mathbf{S} \otimes \mathbf{T})\mathbf{P}_m,
        \end{equation}
        where $\mathbf{P}_n \in \{0,1\}^{n^2 \times n}$ and $\mathbf{P}_m \in \{0,1\}^{m^2 \times m}$ are partial permutation matrices.
        If $\mathbf{S},\mathbf{T} \in \mathbb{R}^{n\times n}$ are square matrices, we have that $\mathbf{S} \circ \mathbf{T} = \mathbf{P}_n^{\mathsf{T}}(\mathbf{S} \otimes \mathbf{T})\mathbf{P}_n$.\\
        Proof: see \cite{visick2000quantitative}.
    \end{theorem}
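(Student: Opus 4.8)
The plan is to exhibit the partial permutation matrices explicitly and then verify the identity entrywise using the mixed-product property of the Kronecker product. First I would define $\mathbf{P}_n \in \{0,1\}^{n^2 \times n}$ as the matrix whose $j$th column is $\mathbf{e}_j \otimes \mathbf{e}_j$, where $\mathbf{e}_j$ denotes the $j$th canonical basis vector of $\mathbb{R}^n$; analogously $\mathbf{P}_m \in \{0,1\}^{m^2 \times m}$ has $j$th column $\mathbf{e}_j \otimes \mathbf{e}_j$ with $\mathbf{e}_j \in \mathbb{R}^m$. Each of these matrices has exactly one nonzero entry per column and at most one per row, so they are indeed partial permutation matrices. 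Intuitively, $\mathbf{P}_n$ selects, from the $n^2$ ``doubled'' row indices of the Kronecker product, only those of the coincident form $(i,i)$, and likewise $\mathbf{P}_m$ selects the coincident column indices $(j,j)$.

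Next I would compute the generic $(i,j)$ entry of the right-hand side. By the column description of $\mathbf{P}_n$ and $\mathbf{P}_m$, this entry equals the scalar $(\mathbf{e}_i \otimes \mathbf{e}_i)^{\mathsf{T}} (\mathbf{S} \otimes \mathbf{T}) (\mathbf{e}_j \otimes \mathbf{e}_j)$. Using the transpose rule $(\mathbf{e}_i \otimes \mathbf{e}_i)^{\mathsf{T}} = \mathbf{e}_i^{\mathsf{T}} \otimes \mathbf{e}_i^{\mathsf{T}}$ and applying the mixed-product property $(\mathbf{A} \otimes \mathbf{B})(\mathbf{C} \otimes \mathbf{D}) = (\mathbf{A}\mathbf{C}) \otimes (\mathbf{B}\mathbf{D})$ twice yields $(\mathbf{e}_i^{\mathsf{T}} \mathbf{S} \mathbf{e}_j) \otimes (\mathbf{e}_i^{\mathsf{T}} \mathbf{T} \mathbf{e}_j)$. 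Since each factor is a scalar, the Kronecker product collapses to the ordinary product $(\mathbf{e}_i^{\mathsf{T}} \mathbf{S} \mathbf{e}_j)(\mathbf{e}_i^{\mathsf{T}} \mathbf{T} \mathbf{e}_j) = S_{ij} T_{ij}$, which is exactly $(\mathbf{S} \circ \mathbf{T})_{ij}$. As $i$ and $j$ range over all admissible indices, this establishes the full matrix identity $\mathbf{S} \circ \mathbf{T} = \mathbf{P}_n^{\mathsf{T}}(\mathbf{S} \otimes \mathbf{T})\mathbf{P}_m$.

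Finally, the square case $\mathbf{S}, \mathbf{T} \in \mathbb{R}^{n \times n}$ follows immediately by setting $m = n$, so that $\mathbf{P}_m = \mathbf{P}_n$ and the identity reads $\mathbf{S} \circ \mathbf{T} = \mathbf{P}_n^{\mathsf{T}}(\mathbf{S} \otimes \mathbf{T})\mathbf{P}_n$. I do not anticipate a genuine obstacle here: the argument is essentially careful index bookkeeping, and the only delicate point is pinning down the correct selection pattern so that the congruence $\mathbf{P}_n^{\mathsf{T}}(\cdot)\mathbf{P}_m$ extracts precisely the entries where the two Kronecker index pairs coincide. The mixed-product property is what makes the verification clean, converting what would otherwise be a tedious summation over Kronecker-block indices into a one-line scalar computation.
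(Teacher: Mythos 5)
Your proof is correct: the selection matrix $\mathbf{P}_n$ with columns $\mathbf{e}_j \otimes \mathbf{e}_j$ is exactly the construction underlying Visick's result, and your entrywise verification via the mixed-product property, collapsing $(\mathbf{e}_i^{\mathsf{T}}\mathbf{S}\mathbf{e}_j)\otimes(\mathbf{e}_i^{\mathsf{T}}\mathbf{T}\mathbf{e}_j)$ to $S_{ij}T_{ij}$, is the standard argument. The paper itself offers no proof and simply defers to the cited reference, so your self-contained derivation matches the intended route and fills in precisely what the citation points to.
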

    Therefore, by using (\ref{eqn:spectrum_kronecker_product}) and Theorem \ref{trm:hadamard_kronecker}, we can derive a general form of the spectrum of the Hadamard product (specifically, for the Hadamard power of order $2$) as follows:
    \begin{equation}
        \mathbf{L} \circ \mathbf{L} = \mathbf{L}^{(2)} = \mathbf{P}_N^{\mathsf{T}} (\mathbf{U} \otimes \mathbf{U})(\mathbf{\Lambda} \otimes \mathbf{\Lambda})(\mathbf{U}^{\mathsf{T}} \otimes \mathbf{U}^{\mathsf{T}})\mathbf{P}_N.
    \end{equation}
    We can then conclude that the spectrum of the Hadamard power of order $\rho$ is such that:
    \begin{equation}
        \mathbf{L}^{(\rho)} = \mathbf{P}_{(\rho)}^{\mathsf{T}} (\mathbf{U} \otimes \mathbf{U}_{(\rho-1)})(\mathbf{\Lambda} \otimes \mathbf{\Lambda}_{(\rho-1)})(\mathbf{U}^{\mathsf{T}} \otimes \mathbf{U}_{(\rho-1)}^{\mathsf{T}})\mathbf{P}_{(\rho)},
    \end{equation}
    where $\mathbf{U}_{(\rho-1)}$ and $\mathbf{\Lambda}_{(\rho-1)}$ are respectively the matrices of eigenvectors and eigenvalues of $\mathbf{L}^{(\rho-1)}$, and $\mathbf{P}_{(\rho)} \in \{0,1\}^{N^2 \times N}$ is a partial permutation matrix for $\mathbf{L}^{(\rho)}$.
\end{proof}

% \begin{thebibliography}{00}

% %% \bibitem{label}
% %% Text of bibliographic item

% \bibitem{}

% \end{thebibliography}

\section{Proof of Theorem \ref{dYY}}
\label{AppC}

\begin{proof}
Firstly, we express the perturbed Laplacian matrix $\hat{\mathbf{L}}$ based on the perturbation model in (\ref{pert_model}) as:
\begin{equation}
\hat{\mathbf{L}}=\hat{\mathbf{D}}-\hat{\mathbf{A}}=(\mathbf{D}+\mathbf{D}_E)-(\mathbf{A}+\mathbf{E}),
\end{equation}
where
\begin{equation}
\mathbf{D}_E=\diag(\mathbf{E1})=\diag(d_{e_1},\hdots,d_{e_N}).
\end{equation}
With the notation of
\begin{equation}
\label{Lk}
\hat{\mathbf{L}}^{(\rho)}=\underbrace{(\hat{\mathbf{L}}+\epsilon\mathbf{I})\circ\hdots\circ(\hat{\mathbf{L}}+\epsilon\mathbf{I})}_{\rho \text{ times}}\text{,}
\end{equation}
the next lemma expresses the elements of $\mathbf{L}^{(\rho)}$ and $\hat{\mathbf{L}}^{(\rho)}$, \ie $\{(\mathbf{L}^{(\rho)})_{ij}\}_{i,j=1}^N$ and $\{(\hat{\mathbf{L}}^{(\rho)})_{ij}\}_{i,j=1}^N$, respectively, based on the edge weights and node degrees as follows:
\begin{lemma}
The elements of $\mathbf{L}^{(\rho)}$ and $\hat{\mathbf{L}}^{(\rho)}$, \ie $\{(\mathbf{L}^{(\rho)})_{ij}\}_{i,j=1}^N$ and $\{(\hat{\mathbf{L}}^{(\rho)})_{ij}\}_{i,j=1}^N$, can be written as:
\begin{equation}
\begin{split}
&\forall i\ne j:\:(\mathbf{L}^{(\rho)})_{ij}=(-1)^{\rho}a^{\rho}_{ij};\:(\hat{\mathbf{L}}^{(\rho)})_{ij}=(-1)^{\rho}(a_{ij}+e_{ij})^{\rho}\\
&(\mathbf{L}^{(\rho)})_{ii}=(d_i+\epsilon)^{\rho};~(\hat{\mathbf{L}}^{(\rho)})_{ii}=(d_i+d_{e_i}+\epsilon)^{\rho}.\\
\end{split}
\end{equation}
\end{lemma}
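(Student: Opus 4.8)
The plan is to prove the lemma by direct computation of the entries of the Hadamard powers, relying on the fact that the Hadamard power acts entrywise. Recall from Definition \ref{dfn:sparse_sobolev_term} that $\mathbf{L}^{(\rho)}$ is the $\rho$-fold Hadamard product of $(\mathbf{L}+\epsilon\mathbf{I})$ with itself. Since the Hadamard product is defined element-wise, the $(i,j)$ entry of $\mathbf{L}^{(\rho)}$ is simply the $\rho$th power of the $(i,j)$ entry of $(\mathbf{L}+\epsilon\mathbf{I})$, that is, $(\mathbf{L}^{(\rho)})_{ij} = \left((\mathbf{L}+\epsilon\mathbf{I})_{ij}\right)^{\rho}$. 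This reduces the whole problem to reading off the entries of $\mathbf{L}+\epsilon\mathbf{I}$ and raising them to the power $\rho$, so there is essentially no hard obstacle here; the work is purely bookkeeping.

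First I would treat the off-diagonal entries ($i \neq j$). For the combinatorial Laplacian $\mathbf{L}=\mathbf{D}-\mathbf{A}$, the off-diagonal entry is $L_{ij} = -a_{ij}$, and since $\epsilon\mathbf{I}$ contributes nothing off the diagonal, $(\mathbf{L}+\epsilon\mathbf{I})_{ij} = -a_{ij}$. Raising to the $\rho$th power entrywise gives $(\mathbf{L}^{(\rho)})_{ij} = (-a_{ij})^{\rho} = (-1)^{\rho} a_{ij}^{\rho}$, as claimed. For the perturbed matrix, the perturbation model in \eqref{pert_model} gives $\hat{\mathbf{A}}=\mathbf{A}+\mathbf{E}$, so the off-diagonal entry of $\hat{\mathbf{L}}=\hat{\mathbf{D}}-\hat{\mathbf{A}}$ is $-(a_{ij}+e_{ij})$, yielding $(\hat{\mathbf{L}}^{(\rho)})_{ij} = (-1)^{\rho}(a_{ij}+e_{ij})^{\rho}$.

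Next I would treat the diagonal entries ($i=j$). The diagonal of $\mathbf{L}$ holds the degrees, $L_{ii}=d_i$, and adding $\epsilon\mathbf{I}$ gives $(\mathbf{L}+\epsilon\mathbf{I})_{ii}=d_i+\epsilon$; the entrywise $\rho$th power is then $(\mathbf{L}^{(\rho)})_{ii}=(d_i+\epsilon)^{\rho}$. For the perturbed case, the key point is that the perturbed degree is $\hat{d}_i = d_i + d_{e_i}$, where $d_{e_i}$ is the $i$th diagonal entry of $\mathbf{D}_E=\diag(\mathbf{E}\mathbf{1})$; this follows because $\hat{\mathbf{D}}=\diag(\hat{\mathbf{A}}\mathbf{1})=\diag((\mathbf{A}+\mathbf{E})\mathbf{1})=\mathbf{D}+\mathbf{D}_E$. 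Hence $(\hat{\mathbf{L}}+\epsilon\mathbf{I})_{ii}=d_i+d_{e_i}+\epsilon$ and raising to the power $\rho$ gives $(\hat{\mathbf{L}}^{(\rho)})_{ii}=(d_i+d_{e_i}+\epsilon)^{\rho}$, completing the four cases.

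The only subtlety worth flagging is ensuring that the $\epsilon\mathbf{I}$ shift is correctly incorporated before taking the Hadamard power rather than after, and that the degree-perturbation decomposition $\hat{\mathbf{D}}=\mathbf{D}+\mathbf{D}_E$ is made explicit so that the perturbed diagonal entries are expressed cleanly in terms of $d_i$, $d_{e_i}$, and $\epsilon$; these are precisely the ingredients that the subsequent stability bound in Theorem \ref{dYY} will exploit. Beyond this, the argument is a routine entrywise computation and does not require any deeper machinery such as the Schur product theorem used elsewhere in the paper.
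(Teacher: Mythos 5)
Your proof is correct and follows essentially the same route as the paper's: both reduce the claim to the fact that the Hadamard power is entrywise exponentiation and then read off the entries of $\mathbf{L}+\epsilon\mathbf{I}$ and $\hat{\mathbf{L}}+\epsilon\mathbf{I}$, using $\hat{\mathbf{D}}=\mathbf{D}+\mathbf{D}_E$ and the zero diagonals of $\mathbf{A}$ and $\mathbf{E}$. The paper merely packages the same computation as the matrix identity $\mathbf{L}^{(\rho)}=(\mathbf{D}+\epsilon\mathbf{I})^{\rho}+(-1)^{\rho}\mathbf{A}^{(\rho)}$ (cross terms vanish because a diagonal matrix Hadamard-multiplied with a hollow matrix is zero), which is equivalent to your entrywise bookkeeping.
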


\begin{proof}
    Using the convention (\ref{Lk}) and zero-diagonality of $\mathbf{A}$, one can write: 
\begin{equation}
\label{Lhat_k}
\begin{split}
\mathbf{L}^{(\rho)}&=\underbrace{(\mathbf{L}+\epsilon\mathbf{I})\circ\hdots\circ(\mathbf{L}+\epsilon\mathbf{I})}_{{\rho} \text{ times}}\\
&=\underbrace{(\mathbf{D}-\mathbf{A}+\epsilon\mathbf{I})\circ\hdots\circ(\mathbf{D}-\mathbf{A}+\epsilon\mathbf{I})}_{{\rho} \text{ times}}\\
&=(\mathbf{D}+\epsilon\mathbf{I})^{\rho}
+(-1)^{\rho}\underbrace{\mathbf{A}\circ\hdots\circ\mathbf{A}}_{{\rho} \text{ times}}\\
&\rightarrow(\mathbf{L}^{(\rho)})_{ii}=(d_i+\epsilon)^{\rho};~(\mathbf{L}^{(\rho)})_{ij}=(-1)^{\rho}a^{\rho}_{ij}.
\end{split}
\end{equation}
Similarly for $\hat{\mathbf{L}}^{(\rho)}$ and by considering the zero-diagonality of $\mathbf{A}$ and $\mathbf{E}$, one can write:
\begin{equation}
\label{Lhat_k22}
\begin{split}
\small \hat{\mathbf{L}}^{(\rho)}&=\underbrace{(\hat{\mathbf{L}}+\epsilon\mathbf{I})\circ\hdots\circ(\hat{\mathbf{L}}+\epsilon\mathbf{I})}_{{\rho} \text{ times}}\\
&\small =\underbrace{(\mathbf{D}+\mathbf{D}_E+\epsilon\mathbf{I}-(\mathbf{A}+\mathbf{E}))\circ\hdots\circ(\mathbf{D}+\mathbf{D}_E+\epsilon\mathbf{I}-(\mathbf{A}+\mathbf{E}))}_{{\rho} \text{ times}}\\
&\small =(\mathbf{D}+\mathbf{D}_E+\epsilon\mathbf{I})^{\rho}+(-1)^{\rho}\underbrace{(\mathbf{A}+\mathbf{E})\circ\hdots\circ(\mathbf{A}+\mathbf{E})}_{{\rho} \text{ times}}\\
& \resizebox{0.92\columnwidth}{!}{$ \rightarrow (\hat{\mathbf{L}}^{(\rho)})_{ii}=(d_i+d_{e_i}+\epsilon)^{\rho};~ (\hat{\mathbf{L}}^{(\rho)})_{ij}=(-1)^{\rho}(a_{ij}+e_{ij})^{\rho}.$}
\end{split}
\end{equation}
\end{proof}

Besides, the next lemma gives upper bounds for $\|\hat{\mathbf{L}}\|$ and Laplacian perturbation distance $\|\mathbf{L}^{(\rho)}-\hat{\mathbf{L}}^{(\rho)}\|$. Note that $r_1(M)$ and $c_1(M)$ denote the maximum Euclidean row and column norms of the matrix $M$.
\begin{lemma}
\label{Lemmaa2} The Laplacian perturbation distance $\|\mathbf{L}^{(\rho)}-\hat{\mathbf{L}}^{(\rho)}\|$ and $\|\hat{\mathbf{L}}\|$ are upper bounded by:
\begin{equation}
\label{norm_Lhat_L2}
\begin{split}
&\|\mathbf{L}^{(\rho)}-\hat{\mathbf{L}}^{(\rho)}\| \le\\
&\sum_{m=1}^{\rho}{\binom{\rho}{m}\left(\eta_m\left\|\mathbf{E}^{(m)}\right\|+(d^{max}+\epsilon)^{\rho-m}(d^{max}_{e})^m\right)},
\end{split}
\end{equation}
% \begin{equation}
% \label{norm_Lhat_L2}
% \begin{split}
% &\|\mathbf{L}^{(\alpha)}-\hat{\mathbf{L}}^{(\alpha)}\| \le {\alpha} \eta \|\mathbf{E}\|+{\alpha}(d^{max}+\epsilon)^{\alpha} d^{max}_{e}+\xi^2+\xi^2_D,
% \end{split}
% \end{equation}
and
\begin{equation}
\label{norm_Lhat}
\|\hat{\mathbf{L}}^{(\rho)}\|\le(\hat{d}^{max}+\epsilon)^{\rho}+\|\hat{\mathbf{A}}^{(\rho)}\|,
\end{equation}
% \begin{equation}
% \label{norm_Lhat}
% \begin{split}
% &\|\hat{\mathbf{L}}^{(\alpha)}\|\le(\hat{d}^{max}+\epsilon)^{\alpha}+\|\hat{\mathbf{A}}^{(\alpha)}\|\\
% \end{split}
% \end{equation}
where $\hat{d}^{max}=d^{max}+d_{e}^{max}$, $d^{max}=\max_{i}{d_{i}}$, and $d_e^{max}=\max_{i}{d_{e_i}}$.
% \begin{equation}
% \begin{split}
% &\hat{d}^{max}=d^{max}+d_{e}^{max};\:\:d^{max}=\max_{i}{d_{i}},\:\:d_e^{max}=\max_{i}{d_{e_i}}
% \end{split}
% \end{equation}
\end{lemma}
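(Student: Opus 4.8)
The plan is to exploit the additive split of the Hadamard power already isolated in the proof of Lemma~1. Equations (\ref{Lhat_k}) and (\ref{Lhat_k22}) give $\mathbf{L}^{(\rho)}=(\mathbf{D}+\epsilon\mathbf{I})^{\rho}+(-1)^{\rho}\mathbf{A}^{(\rho)}$ and $\hat{\mathbf{L}}^{(\rho)}=(\mathbf{D}+\mathbf{D}_E+\epsilon\mathbf{I})^{\rho}+(-1)^{\rho}\hat{\mathbf{A}}^{(\rho)}$, where the purely diagonal (degree) part and the purely off-diagonal (adjacency) part decouple because $\mathbf{A}$ and $\mathbf{E}$ have zero diagonal. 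Subtracting and applying the triangle inequality for the spectral norm gives
\begin{equation}
\|\mathbf{L}^{(\rho)}-\hat{\mathbf{L}}^{(\rho)}\|\le\left\|(\mathbf{D}+\epsilon\mathbf{I})^{\rho}-(\mathbf{D}+\mathbf{D}_E+\epsilon\mathbf{I})^{\rho}\right\|+\left\|\mathbf{A}^{(\rho)}-\hat{\mathbf{A}}^{(\rho)}\right\|,
\end{equation}
and I would bound the diagonal and off-diagonal terms separately, matching the two summands inside (\ref{norm_Lhat_L2}).

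For the off-diagonal term the key tool is the \emph{Hadamard binomial expansion}: since the Hadamard product is commutative and bilinear and $\hat{\mathbf{A}}=\mathbf{A}+\mathbf{E}$, the entrywise identity $(a_{ij}+e_{ij})^{\rho}=\sum_{m=0}^{\rho}\binom{\rho}{m}a_{ij}^{\rho-m}e_{ij}^{m}$ lifts to $\hat{\mathbf{A}}^{(\rho)}=\sum_{m=0}^{\rho}\binom{\rho}{m}\mathbf{A}^{(\rho-m)}\circ\mathbf{E}^{(m)}$, where $\mathbf{A}^{(0)}=\mathbf{E}^{(0)}=\mathbf{1}\mathbf{1}^{\mathsf{T}}$ is the Hadamard identity. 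The $m=0$ term equals $\mathbf{A}^{(\rho)}$ and cancels, leaving $\mathbf{A}^{(\rho)}-\hat{\mathbf{A}}^{(\rho)}=-\sum_{m=1}^{\rho}\binom{\rho}{m}\mathbf{A}^{(\rho-m)}\circ\mathbf{E}^{(m)}$. I would then invoke the Schur estimate $\|\mathbf{S}\circ\mathbf{T}\|\le r_1(\mathbf{S})\,c_1(\mathbf{T})$ (a one-line Cauchy--Schwarz argument over the double index $(i,j)$), combine it with the elementary inequality $c_1(\mathbf{T})\le\|\mathbf{T}\|$ and its transposed counterpart, and conclude $\|\mathbf{A}^{(\rho-m)}\circ\mathbf{E}^{(m)}\|\le\eta_m\|\mathbf{E}^{(m)}\|$ with $\eta_m=\min(r_1(\mathbf{A}^{(\rho-m)}),c_1(\mathbf{A}^{(\rho-m)}))$. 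Summing over $m$ reproduces the $\eta_m\|\mathbf{E}^{(m)}\|$ contribution.

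For the diagonal term, $\mathbf{D}$, $\mathbf{D}_E$, and $\epsilon\mathbf{I}$ all commute, so the ordinary scalar binomial theorem gives $(\mathbf{D}+\mathbf{D}_E+\epsilon\mathbf{I})^{\rho}-(\mathbf{D}+\epsilon\mathbf{I})^{\rho}=\sum_{m=1}^{\rho}\binom{\rho}{m}(\mathbf{D}+\epsilon\mathbf{I})^{\rho-m}\mathbf{D}_E^{m}$, a diagonal matrix whose spectral norm is its largest entry; bounding each entry by $(d^{max}+\epsilon)^{\rho-m}(d_e^{max})^{m}$ supplies the second half of each summand and completes (\ref{norm_Lhat_L2}). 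The bound (\ref{norm_Lhat}) then follows from the same decomposition of $\hat{\mathbf{L}}^{(\rho)}$ by a single triangle inequality: the diagonal piece $(\mathbf{D}+\mathbf{D}_E+\epsilon\mathbf{I})^{\rho}$ has norm $\max_i(d_i+d_{e_i}+\epsilon)^{\rho}\le(\hat{d}^{max}+\epsilon)^{\rho}$ since $d_i+d_{e_i}\le d^{max}+d_e^{max}$, and the remaining term is exactly $\|\hat{\mathbf{A}}^{(\rho)}\|$.

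I expect the only delicate point to be the Hadamard operator-norm estimate and the bookkeeping of its constant: one must route the argument through $\|\mathbf{S}\circ\mathbf{T}\|\le r_1(\mathbf{S})\,c_1(\mathbf{T})$, then relax $c_1(\mathbf{E}^{(m)})\le\|\mathbf{E}^{(m)}\|$ and, via the transposed inequality, $r_1(\mathbf{E}^{(m)})\le\|\mathbf{E}^{(m)}\|$, so as to land on the minimum $\eta_m$ as stated. Everything else — the two binomial expansions, the triangle inequalities, and evaluating spectral norms of diagonal matrices as maxima of their entries — is routine once the diagonal/off-diagonal split inherited from Lemma~1 is in place.
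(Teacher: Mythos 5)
Your proposal is correct and follows essentially the same route as the paper's proof: the same diagonal/off-diagonal split inherited from the elementwise expressions, the same binomial expansion (you phrase it as a Hadamard-product identity, the paper writes it entrywise and then assembles the matrix form), and the same Hadamard operator-norm bound $\|\mathbf{A}_1\circ\mathbf{A}_2\|\le\min\{r_1(\mathbf{A}_1),c_1(\mathbf{A}_1)\}\|\mathbf{A}_2\|$, which the paper cites from Zhan while you sketch its derivation from the Schur row/column estimate. The bound on $\|\hat{\mathbf{L}}^{(\rho)}\|$ is obtained identically via the triangle inequality on the same decomposition.
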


\begin{proof}
For the sake of simplicity, we start with $\|\hat{\mathbf{L}}^{(\rho)}\|$. 
Using (\ref{Lhat_k22}) and by exploiting the triangular inequality principle of the norms, one can write:
\begin{equation}
\label{norm_Lhat}
\begin{split}
&\small \|\hat{\mathbf{L}}^{(\rho)}\|\le \|(\mathbf{D}+\mathbf{D}_E+\epsilon\mathbf{I})^{\rho}+(-1)^{\rho}\underbrace{(\mathbf{A}+\mathbf{E})\circ\hdots\circ(\mathbf{A}+\mathbf{E})}_{{\rho} \text{ times}}\|\\
&\small \le \|(\mathbf{D}+\mathbf{D}_E+\epsilon\mathbf{I})^{\rho}\|+\|\underbrace{(\mathbf{A}+\mathbf{E})\circ\hdots\circ(\mathbf{A}+\mathbf{E})}_{{\rho} \text{ times}: \: \hat{\mathbf{A}}^{(\rho)}}\|\\
&\small \le(\overbrace{d^{max}+d_{e}^{max}}^{\hat{d}^{max}}+\epsilon)^{\rho}+\|\hat{\mathbf{A}}^{(\rho)}\|.
\end{split}
\end{equation}
Regarding $\|\mathbf{L}^{(\rho)}-\hat{\mathbf{L}}^{(\rho)}\|$, it is easier to work on the Laplacian elements. We have:
\begin{equation}
\label{Lhat_ij}
\begin{split}
\forall i\ne j:\:(\mathbf{L}^{(\rho)}-\hat{\mathbf{L}}^{(\rho)})_{ij}&=(-1)^{{\rho}+1}((a_{ij}+e_{ij})^{\rho}-a^{\rho}_{ij})\\
&= (-1)^{{\rho}+1}\left(\sum_{m=1}^{\rho}{\binom{\rho}{m}a^{\rho-m}_{ij}e^m_{ij}}\right),
\end{split}
\end{equation}
\begin{equation}
\label{Lhat_ii}
\begin{split}
(\mathbf{L}^{(\rho)}-\hat{\mathbf{L}}^{(\rho)})_{ii}&=(d_i+\epsilon)^{\rho}-(d_i+\epsilon+d_{e_i})^{\rho}\\
&=-\sum_{m=1}^{\rho}{\binom{\rho}{m}(d_i+\epsilon)^{\rho-m}d^m_{e_i}}.
\end{split}
\end{equation}

Therefore, using (\ref{Lhat_ij}) and (\ref{Lhat_ii}), we can obtain a matrix-form to express $\mathbf{L}^{(\rho)}-\hat{\mathbf{L}}^{(\rho)}$ as:
\begin{equation}
\begin{split}
\small &\mathbf{L}^{(\rho)}-\hat{\mathbf{L}}^{(\rho)}=\\
&\sum_{m=1}^{\rho}{\binom{\rho}{m}\left((-1)^{{\rho}+1}\mathbf{A}^{({\rho}-m)}\circ\mathbf{E}^{(m)}+(\mathbf{D}+\epsilon\mathbf{I})^{({\rho}-m)}\circ\mathbf{D}^{(m)}_E\right)},
\end{split}
\end{equation}
Next, by applying the triangular principle of the norm, one can write:
\begin{equation}
\label{norm_Lhat_L}
\begin{split}
&\|\mathbf{L}^{(\rho)}-\hat{\mathbf{L}}^{(\rho)}\|\le\\
& \sum_{m=1}^{\rho}{\binom{\rho}{m}\left(\left\|\mathbf{A}^{({\rho}-m)}\circ\mathbf{E}^{(m)}\right\|+\left\|(\mathbf{D}+\epsilon\mathbf{I})^{({\rho}-m)}\circ\mathbf{D}^{(m)}_E\right\|\right)}.
\end{split}
\end{equation}

The next Lemma makes it possible to take further steps for simplifying the current bound in (\ref{norm_Lhat_L}).

\begin{lemma}
\label{Lemma_A12}
(Lemma 2c, \cite{zhan1997inequalities}). For any symmetric matrices $\mathbf{A}_1$ and $\mathbf{A}_2$, there is an upper bound on $\|\mathbf{A}_1\circ\mathbf{A}_2\|$ as:
\begin{equation}
\|\mathbf{A}_1\circ\mathbf{A}_2\|\le \min\{c_1(\mathbf{A}_1),r_1(\mathbf{A}_1)\} \|\mathbf{A}_2\|
\end{equation}
\end{lemma}

Using Lemma \ref{Lemma_A12}, we can better simplify the bound in (\ref{norm_Lhat_L}). We first apply Lemma \ref{Lemma_A12} on the first term of the bound in (\ref{norm_Lhat_L}):
\begin{equation}
\label{eq41}
\|(\mathbf{A}^{({\rho}-m)}\circ\mathbf{E}^{(m)})\|\le \overbrace{\min\{r_1(\mathbf{A}^{({\rho}-m)}),c_1(\mathbf{A}^{({\rho}-m)})\}}^{\eta_m}\|\mathbf{E}^{(m)}\|.
\end{equation}
Similarly, the second term can be written as:
\begin{equation}
\label{eq42}
\begin{split}
&\|((\mathbf{D}+\epsilon\mathbf{I})^{({\rho}-m)}\circ\mathbf{D}^{(m)}_E)\|\\
&\le\overbrace{\min\{r_1((\mathbf{D}+\epsilon\mathbf{I})^{({\rho}-m)}),c_1((\mathbf{D}+\epsilon\mathbf{I})^{({\rho}-m)})\}}^{(d^{max}+\epsilon)^{\rho-m}}\overbrace{\|\mathbf{D}^{(m)}_E\|}^{(d^{max}_{e})^m}.
\end{split}
\end{equation}
Finally, by plugging (\ref{eq41}) and (\ref{eq42}) into (\ref{norm_Lhat_L}), the desired result in (\ref{norm_Lhat_L2}) is obtained, and it completes the proof of Lemma \ref{Lemmaa2}. 
\end{proof}

Now, using the triangular principle of the norms, the property of being Lipschitz for the nonlinearity function $\sigma(.)$, and the results in Lemma \ref{Lemmaa2}, we come back to find an upper bound for $d(\mathbf{Y},\hat{\mathbf{Y}})$ in (\ref{d_yhat}) as follows: 
\begin{equation}
\begin{split}
&\|\sigma(\hat{\mathbf{L}}^{(\rho)}\mathbf{X}\hat{\mathbf{W}})-\sigma(\mathbf{L}^{(\rho)}\mathbf{X}\mathbf{W})\|\le\|\hat{\mathbf{L}}^{(\rho)}\mathbf{X}\hat{\mathbf{W}}-\mathbf{L}^{(\rho)}\mathbf{X}\mathbf{W}\|\\
&=\|\hat{\mathbf{L}}^{(\rho)}\mathbf{X}\hat{\mathbf{W}}-\hat{\mathbf{L}}^{(\rho)}\mathbf{X}\mathbf{W}+\hat{\mathbf{L}}^{(\rho)}\mathbf{X}\mathbf{W}-\mathbf{L}^{(\rho)}\mathbf{X}\mathbf{W}\|\\
&\le\|\hat{\mathbf{L}}^{(\rho)}\mathbf{X}\hat{\mathbf{W}}-\hat{\mathbf{L}}^{(\rho)}\mathbf{X}\mathbf{W}\|+\|\hat{\mathbf{L}}^{(\rho)}\mathbf{X}\mathbf{W}-\mathbf{L}^{(\rho)}\mathbf{X}\mathbf{W}\|\\
&\le\|\hat{\mathbf{L}}^{(\rho)}\|~\|\mathbf{X}\|~\|\mathbf{W}-\hat{\mathbf{W}}\|+\|\hat{\mathbf{L}}^{(\rho)}-\mathbf{L}^{(\rho)}\|~\|\mathbf{X}\|~\|\mathbf{W}\|\\
&\le\sqrt{\upsilon}\left(\left[(\hat{d}^{max}+\epsilon)^{\rho}+\|\hat{\mathbf{A}}^{(\rho)}\|\right]\delta_{\mathbf{W}}+\right.\\
&\sum_{m=1}^{\rho}{\binom{\rho}{m}\left(\eta_m\left\|\mathbf{E}^{(m)}\right\|+(d^{max}+\epsilon)^{\rho-m}(d^{max}_{e})^m\right)\|\mathbf{W}\|}).
\end{split}
\end{equation}
Then, using $\|\mathbf{E}\|\le\xi$ and $\|\mathbf{D}_E\|\le\xi_D$, where $\xi$ and $\xi_D$ are sufficiently small, one can use the first-order Taylor approximation (by considering only $m=1$) to further simplify the upper bound as:
\begin{equation}
\begin{split}
&\|\sigma(\hat{\mathbf{L}}^{(\rho)}\mathbf{X}\hat{\mathbf{W}})-\sigma(\mathbf{L}^{(\rho)}\mathbf{X}\mathbf{W})\|\\
&\le\sqrt{\upsilon}\left(\left[(\hat{d}^{max}+\epsilon)^{\rho}+\|\hat{\mathbf{A}}^{(\rho)}\|\right]\delta_{\mathbf{W}}\right.\\
&\left.+{\rho} \left[\eta \|\mathbf{E}\|+(d^{max}+\epsilon)^{\rho-1}d^{max}_{e}\right]\|\mathbf{W}\|\right) + \mathcal{O}(\xi^2)+\mathcal{O}(\xi^2_D).
\end{split}
\end{equation}
and this completes the proof of Theorem \ref{dYY}. Note that, in the last line, we used $\|\mathbf{X}\|\le\|\mathbf{X}\|_F=\sqrt{\upsilon}$.
\end{proof}

{
\section{Experimental Stability Analysis}
\label{app:experiments_stability_analysis}

We validate Theorem \ref{dYY} through experiments by analyzing how the theoretical bounds on the Right Hand Side (RHS) and Left Hand Side (LHS) depend on: 1) the Signal-to-Noise-Ratio (SNR) in both adjacency and weight matrices, 2) the Hadamard order $\rho$, 3) the graph sparsity (relevant to $p_{ER}$), and 4) the Sobolev coefficient $\epsilon$.
%We experimentally analyze the Theorem \ref{dYY} by expressing the dependencies of the theoretical Right Hand Side (RHS) and Left Hand Side (LHS) bounds on 1) the Signal-to-Noise-Ratio (SNR) in both adjacency and weight matrices, 2) the Hadamard order $\rho$, 3) the graph sparsity (relevant to $p$), and 4) the Sobolev coefficient $\epsilon$.
More precisely, we build the underlying graph using the ER model with $N$ nodes and edge probability $p_{ER}$.
We then generate true and perturbed elements using \eqref{eqn:perturbations} with varying SNRs $\in\{5,10,20,30,40\}$ to perturb the underlying adjacency matrix and weight matrix $\mathbf{W}$.
Both the initial feature matrix $\mathbf{X}\in\mathbb{R}^{N\times F_0}$ and weight matrix $\mathbf{W}\in\mathbb{R}^{F_0\times F_1}$ are drawn from normal distributions.
In all scenarios, we set $F_0=16$ and $F_1=2$, and the results are averaged over $100$ random seeds.
We analyze the following scenarios:
\begin{itemize}
\item Varying $\rho$: we test $\rho$ in $\{2,3\}$ by letting $N=10$, $p_{ER}=0.3$, and $\epsilon=0.5$.
\item Varying $p_{ER}$: we test $p_{ER}$ in $\{0.1,0.3,0.5\}$ by letting $N=10$, $\rho=2$, and $\epsilon=0.5$.
\item Varying $\epsilon$: we test $\epsilon$ in $\{0.5,5,10\}$ by letting $N=10$, $\rho=2$, and $p_{ER}=0.3$.
\end{itemize}

Figure \ref{fig:experiments_stability} shows the averaged values of RHS and LHS across different SNRs and varying $\rho$, $p_{ER}$, and $\epsilon$.
We find that higher values of $\rho$ make the network more sensitive to perturbations, especially with low SNRs.
Additionally, the network benefits from graph sparsity, as it is more robust to perturbations with lower values of $p_{ER}$.
We also observe that high values of $\epsilon$ can negatively affect the robustness of S2-GNN.
Finally, as the SNR increases, the upper bound becomes tighter, which is related to smaller values of $\delta_{\mathbf{W}}$, $\|\mathbf{E}\|$, $d^{max}_e$, and also $\zeta$ and $\zeta_D$.
}

% The averaged values of RHS and LHS across different SNRs and with varying $\rho$, $p$, and $\epsilon$ are provided in Figure \ref{fig:experiments_stability}.
% We observe that higher values of $\rho$ make the network more susceptible to perturbations, especially in higher amounts of noise, \ie low SNRs.
% We also observe that the network can benefit from the graph sparsity as it is more robust to perturbations in lower values of $p$.
% Besides, Figure \ref{fig:experiments_stability} illustrates that using high values for $\epsilon$ can have a negative effect on the robustness properties of S2-GNN.
% Finally, we observe that as the SNR increases, the upper bound gets tighter, which is related to smaller values of $\delta_{\mathbf{W}}$, $\|\mathbf{E}\|$, $d^{max}_e$, and also $\zeta$ and $\zeta_D$.

\begin{figure*}
    \centering
    \includegraphics[width=\linewidth]{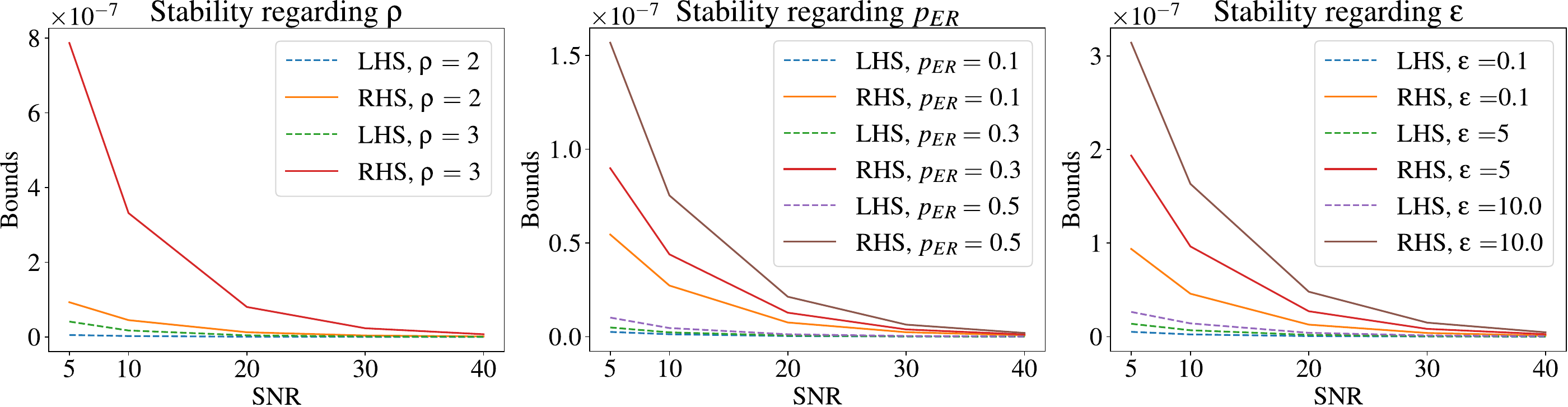}
    \caption{Evaluation of Theorem \ref{dYY} across different configurations for the Sobolev norm when certain perturbations are made to the adjacency matrix.}
    \label{fig:experiments_stability}
\end{figure*}

\section{Homophily Index}
\label{app:homophily}

The homophily index for the datasets used to construct the graphs is presented in Table \ref{tbl:homophily}.
$H(G) \to 1$ when graphs exhibit strong homophily, while $H(G) \to 0$ for graphs with strong heterophily \cite{pei2020geom}.
Thus, low homophily corresponds to high heterophily, and vice versa.
The learned graphs consistently exhibit stronger homophily, which directly correlates with the results in Table \ref{tbl:results_constructed_graphs}, where methods applied to the learned graphs generally perform better.

\begin{table}[]
\centering
\caption{Homophily index $H(G)$ for the constructed graphs with different values of $k$.}
\label{tbl:homophily}
\resizebox{\columnwidth}{!}{
\begin{tabular}{r|cccc|cccc}
\toprule
\multirow{2}{*}{\textbf{Dataset}} & \multicolumn{4}{c|}{\textbf{$k$-NN Graphs}} & \multicolumn{4}{c}{\textbf{Learned Graphs}} \\
                         & $k=10$ & $k=20$ & $k=30$ & $k=40$ & $k=10$ & $k=20$ & $k=30$ & $k=40$ \\
\midrule
\textbf{Cancer-B} & $0.8625$ & $0.8447$ & $0.8334$ & $0.8230$ & $0.8838$ & $0.8670$ & $0.8568$ & $0.8463$ \\
\textbf{Cancer-M} & $0.6267$ & $0.5954$ & $0.5765$ & $0.5629$ & $0.6534$ & $0.6242$ & $0.6054$ & $0.5921$ \\
\textbf{20News} & $0.5557$ & $0.4968$ & $0.4615$ & $0.4354$ & $0.6103$ & $0.5348$ & $0.4919$ & $0.4628$ \\
\textbf{HAR} & $0.8849$ & $0.8478$ & $0.8211$ & $0.8007$ & $0.9087$ & $0.8851$ & $0.8660$ & $0.8500$ \\
\textbf{Isolet} & $0.7346$ & $0.6851$ & $0.6454$ & $0.6105$ & $0.7619$ & $0.7255$ & $0.6943$ & $0.6664$ \\
\bottomrule
\end{tabular}
}
\end{table}

{
\section{Graph Classification Experiments}
\label{app:graph_classification}

We perform additional comparisons between the state-of-the-art methods and S2-GNN for graph classification tasks.
Table \ref{tbl:results_graph_classification} shows the results in three datasets for graph classification: ENZYMES, MUTAG, and PROTEINS \cite{morris2020tudataset}.
We observe that S2-GNN presents competitive results regarding previous state-of-the-art models.
However, we believe that further theoretical and experimental studies should be considered in future work for graph classification.

\begin{table}
\centering
\caption{Accuracy (in \%) comparison for graph classification.}
\label{tbl:results_graph_classification}
\begin{tabular}{r|ccc}
\toprule
\textbf{Model} & \textbf{ENZYMES} & \textbf{MUTAG} & \textbf{PROTEINS} \\
\midrule
Cheby \cite{defferrard2016convolutional} & \color{red} $\textbf{29.39}_{\pm 1.21}$ & $76.11_{\pm 2.54}$ & $67.24_{\pm 0.59}$ \\
GCN \cite{kipf2017semi} & $22.69_{\pm 1.18}$ & $73.14_{\pm 3.86}$ & \color{red} $\textbf{68.15}_{\pm 0.80}$ \\
GAT \cite{velickovic2018graph} & $22.12_{\pm 1.04}$ & $74.98_{\pm 5.53}$ & $64.89{\pm 1.23}$ \\
SGC \cite{wu2019simplifying} & $19.71_{\pm 0.84}$ & $66.73_{\pm 3.86}$ & $61.29_{\pm 0.73}$ \\
ClusterGCN \cite{chiang2019cluster} & $22.56_{\pm 0.81}$ & 
\color{red} $\textbf{81.14}_{\pm 1.11}$ & \color{blue} $\underline{\textit{67.54}}_{\pm 0.57}$ \\
SuperGAT \cite{kim2021find} & $21.97_{\pm 0.97}$ & $71.70_{\pm 6.24}$ & $65.61_{\pm 1.11}$ \\
Transformer \cite{shi2021masked} & \color{blue} $\underline{\textit{24.74}}_{\pm 1.07}$ & \color{blue} $\underline{\textit{80.78}}_{\pm 2.33}$ & $64.71{\pm 1.18}$ \\
GATv2 \cite{brody2022attentive} & $20.78_{\pm 1.24}$ & $71.43{\pm 6.00}$ & $64.47_{\pm 1.13}$ \\
\midrule
S2-GNN (ours) & $23.66_{\pm 0.90}$ & $73.60{\pm 1.46}$ & $67.20_{\pm 0.78}$ \\
\bottomrule
\end{tabular}
%}
\end{table}

}

\end{document}